\newcommand{\commentout}[1]{}
\newcommand{\junk}[1]{}
\Crefname{corollary}{Corollary}{Corollaries}
\Crefname{proposition}{Proposition}{Propositions}
\Crefname{theorem}{Theorem}{Theorems}
\Crefname{definition}{Definition}{Definitions}
\Crefname{assumption}{Assumption}{Assumptions}
\Crefname{example}{Example}{Examples}
\Crefname{remark}{Remark}{Remarks}
\Crefname{setting}{Setting}{Settings}
\Crefname{lemma}{Lemma}{Lemmas}
\declaretheorem[name=Theorem,refname={Theorem,Theorems},Refname={Theorem,Theorems}]{theorem}
\declaretheorem[name=Lemma,refname={Lemma,Lemmas},Refname={Lemma,Lemmas},sibling=theorem]{lemma}
\declaretheorem[name=Definition,refname={Definition,Definitions},Refname={Definition,Definitions},sibling=theorem]{definition}
\newcommand{\cA}{\mathcal{A}}
\newcommand{\cB}{\mathcal{B}}
\newcommand{\cH}{\mathcal{H}}
\newcommand{\cN}{\mathcal{N}}
\newcommand{\cP}{\mathcal{P}}
\newcommand{\cZ}{\mathcal{Z}}
\newcommand{\eps}{\varepsilon}
\newcommand{\realset}{\mathbb{R}}
\newcommand{\diag}[1]{\mathrm{diag}\left(#1\right)}
\newcommand{\E}[1]{\mathbb{E} \left[#1\right]}
\newcommand{\condE}[2]{\mathbb{E} \left[#1 \,\middle|\, #2\right]}
\newcommand{\prob}[1]{\mathbb{P} \left(#1\right)}
\newcommand{\condprob}[2]{\mathbb{P} \left(#1 \,\middle|\, #2\right)}
\newcommand{\abs}[1]{\left|#1\right|}
\newcommand*\dif{\mathop{}\!\mathrm{d}}
\newcommand{\floors}[1]{\left\lfloor#1\right\rfloor}
\newcommand{\I}[1]{\mathds{1} \! \left\{#1\right\}}
\newcommand{\maxnorm}[1]{\|#1\|_\infty}
\newcommand{\set}[1]{\left\{#1\right\}}
\newcommand{\T}{^\top}
\DeclareMathOperator*{\argmax}{arg\,max\,}
\mathchardef\mhyphen="2D
\newcommand{\linucb}{\ensuremath{\tt LinUCB}\xspace}
\newcommand{\metats}{\ensuremath{\tt MetaTS}\xspace}
\newcommand{\metatsd}{\ensuremath{\tt MetaTS/3}\xspace}
\newcommand{\metatsx}{\ensuremath{\tt MetaTSx3}\xspace}
\newcommand{\oraclets}{\ensuremath{\tt OracleTS}\xspace}
\newcommand{\ts}{\ensuremath{\tt TS}\xspace}
\icmltitlerunning{Meta-Thompson Sampling}
\begin{document}

\twocolumn[
\icmltitle{Meta-Thompson Sampling}

\icmlsetsymbol{equal}{*}

\begin{icmlauthorlist}
\icmlauthor{Branislav Kveton}{gr}
\icmlauthor{Mikhail Konobeev}{ua}
\icmlauthor{Manzil Zaheer}{gr}
\icmlauthor{Chih-wei Hsu}{gr}
\icmlauthor{Martin Mladenov}{gr}
\icmlauthor{Craig Boutilier}{gr}
\icmlauthor{Csaba Szepesv\'ari}{dm,ua}
\end{icmlauthorlist}

\icmlaffiliation{dm}{DeepMind}
\icmlaffiliation{gr}{Google Research}
\icmlaffiliation{ua}{University of Alberta}

\icmlcorrespondingauthor{Branislav Kveton}{bkveton@google.com}

\vskip 0.3in
]

\printAffiliationsAndNotice{}

\begin{abstract}
Efficient exploration in bandits is a fundamental online learning problem. We propose a variant of Thompson sampling that learns to explore better as it interacts with bandit instances drawn from an unknown prior. The algorithm meta-learns the prior and thus we call it \metats. We propose several efficient implementations of \metats and analyze it in Gaussian bandits. Our analysis shows the benefit of meta-learning and is of a broader interest, because we derive a novel prior-dependent Bayes regret bound for Thompson sampling. Our theory is complemented by empirical evaluation, which shows that \metats quickly adapts to the unknown prior.
\end{abstract}

\section{Introduction}
\label{sec:introduction}

A \emph{stochastic bandit} \citep{lai85asymptotically,auer02finitetime,lattimore19bandit} is an online learning problem where a \emph{learning agent} sequentially interacts with an environment over $n$ rounds. In each round, the agent pulls an \emph{arm} and then receives the arm's \emph{stochastic reward}. The agent aims to maximize its expected cumulative reward over $n$ rounds. It does not know the mean rewards of the arms \emph{a priori}, so must learn them by pulling the arms. This induces the well-known \emph{exploration-exploitation trade-off}: \emph{explore}, and learn more about an arm; or \emph{exploit}, and pull the arm with the highest estimated reward. In a clinical trial, the arm might be a treatment and its reward is the outcome of that treatment for a patient.

Bandit algorithms are typically designed to have low regret for some problem class of interest to the algorithm designer \citep{lattimore19bandit}. In practice, however, the problem class may not be perfectly specified at the time of the design. For instance, consider applying \emph{Thompson sampling (TS)} \citep{thompson33likelihood,chapelle11empirical,agrawal12analysis,russo18tutorial} to a $2$-armed bandit in which the prior distribution over mean arm rewards, a vital part of TS, is unknown. While the prior is unknown, the designer may know that it is one of two possible priors where either arm $1$ or arm $2$ is optimal with high probability. If the agent could learn which of the two priors has been realized, for instance by interacting repeatedly with bandit instances drawn from that prior, it could adapt its exploration strategy to the realized prior, and thereby incur much lower regret than would be possible without this adaptation.

We formalize this learning problem as follows. A learning agent sequentially interacts with $m$ bandit instances. Each interaction has $n$ rounds and we refer to it as a \emph{task}. The instances share a common structure, namely that their mean arm rewards are drawn from an unknown \emph{instance prior} $P_*$. While $P_*$ is not known, we assume that it is sampled from a \emph{meta-prior} $Q$, which the agent knows with certainty. The goal of the agent is to minimize the regret in each sampled instance almost as well as if it knew $P_*$. This is achieved by adapting to $P_*$ through interactions with the instances. This is a form of \emph{meta-learning} \citep{thrun96explanationbased,thrun98lifelong,baxter98theoretical,baxter00model}, where the agent learns to act from interactions with bandit instances.

We make the following contributions. First, we formalize the problem of Bayes regret minimization where the prior $P_*$ is unknown, and is learned by interactions with bandit instances sampled from it in $m$ tasks. Second, we propose \metats, a \emph{meta-Thompson sampling} algorithm that solves this problem. \metats maintains a distribution over the unknown $P_*$ in each task, which we call a \emph{meta-posterior} $Q_s$, and acts optimistically with respect to it. More specifically, in task $s$, it samples an estimate of $P_*$ as $P_s \sim Q_s$ and then runs TS with prior $P_s$ for $n$ rounds. We show how to implement \metats efficiently in Bernoulli and Gaussian bandits. In addition, we bound its Bayes regret in Gaussian bandits. Our analysis is conservative because it relies only on a single pull of each arm in each task. Nevertheless, it yields an improved regret bound due to adapting to $P_*$. The analysis is of broader interest, as we derive a novel \emph{prior-dependent} upper bound on the Bayes regret of TS. Our theoretical results are complemented by synthetic experiments, which show that \metats adapts quickly to the unknown prior $P_*$, and its regret is comparable to that of TS with a known $P_*$.

\section{Setting}
\label{sec:setting}

We start with introducing our notation. The set $\set{1, \dots, n}$ is denoted by $[n]$. The indicator $\I{E}$ denotes that event $E$ occurs. The $i$-th entry of vector $v$ is denoted by $v_i$. Sometimes we write $v(i)$ to avoid clutter. A diagonal matrix with entries $v$ is denoted by $\diag{v}$. We write $\tilde{O}$ for the big-O notation up to polylogarithmic factors.

Our setting is defined as follows. We have $K$ arms, where a bandit \emph{problem instance} is a vector of arm means $\theta \in \realset^K$. The agent sequentially interacts with $m$ bandit instances, which we index by $s \in [m]$. We refer to each interaction as a \emph{task}. At the beginning of task $s \in [m]$, an instance $\theta_{s, *}$ is sampled i.i.d.\ from an \emph{instance prior distribution} $P_*$. The agent interacts with $\theta_{s, *}$ for $n$ rounds. In round $t \in [n]$, it pulls one arm and observes a stochastic realization of its reward. We denote the pulled arm in round $t$ of task $s$ by $A_{s, t} \in [K]$, the stochastic rewards of all arms in round $t$ of task $s$ by $Y_{s, t} \in \realset^K$, and the reward of arm $i \in [K]$ by $Y_{s, t}(i)$. The result of the interactions in task $s$ is \emph{history}
\begin{align*}
  H_s
  = (A_{s, 1}, Y_{s, 1}(A_{s, 1}), \dots, A_{s, n}, Y_{s, n}(A_{s, n}))\,.
\end{align*}
We denote by $H_{1 : s} = H_1 \oplus \dots \oplus H_s$ a concatenated vector of the histories in tasks $1$ to $s$. We assume that the realized rewards $Y_{s, t}$ are i.i.d.\ with respect to both $s$ and $t$, and that their means are $\condE{Y_{s, t}}{\theta_{s, *} = \theta} = \theta$. For now, we need not assume that the reward noise is sub-Gaussian; but we do adopt this in our analysis (\cref{sec:analysis}).

The $n$-round \emph{Bayes regret} of a learning agent or algorithm over $m$ tasks with instance prior $P_*$ is
\begin{align*}
  R(m, n; P_*)
  = \sum_{s = 1}^m \condE{\sum_{t = 1}^n
  \theta_{s, *}(A_{s, *}) - \theta_{s, *}(A_{s, t})}{P_*}\,,
\end{align*}
where $A_{s, *} = \argmax_{i \in [K]} \theta_{s, *}(i)$ is the \emph{optimal arm} in the problem instance $\theta_{s, *}$ in task $s \in [m]$. The above expectation is over problem instances $\theta_{s, *}$ sampled from $P_*$, their realized rewards, and pulled arms.

We note that $R(1, n; P_*)$ is the standard definition of the $n$-round Bayes regret in a $K$-armed bandit \cite{russo14learning}, and that it is $\tilde{O}(\sqrt{K n})$ for Thomson sampling with prior $P_*$. Since all bandit instances $\theta_{s, *}$ are drawn i.i.d.\ from the same $P_*$, they provide no additional information about each other. Thus, the regret of TS with prior $P_*$ in $m$ such instances is $\tilde{O}(m \sqrt{K n})$. We validate this dependence empirically in \cref{sec:experiments}.

\begin{figure}[t]
  \centering
  \includegraphics[width=2.5in]{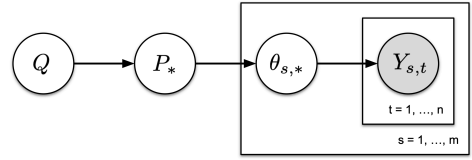}
  \caption{Graphical model of our bandit environment.}
  \label{fig:model}
\end{figure}

Note that Thompson sampling requires $P_*$ as an input. In this work, we try to attain the same regret \emph{without assuming that $P_*$ is known}. We formalize this problem in a Bayesian fashion. In particular, we assume the availability of a prior distribution $Q$ over problem instance priors, and that $P_* \sim Q$. We refer to $Q$ as a \emph{meta-prior} since it is a prior over priors. In Bayesian statistics, this would also be known as a hyper-prior \citep{gelman13bayesian}. The agent knows $Q$ but not $P_*$. We try to learn $P_*$ from sequential interactions with instances $\theta_{s, *}$, which are drawn i.i.d.\ from $P_*$ in each task. Note that $\theta_{s, *}$ is also unknown. The agent only observes its noisy realizations $Y_{s, t}$. We visualize relations of $Q$, $P_*$, $\theta_{s, *}$, and $Y_{s, t}$ in a graphical model in \cref{fig:model}.

One motivating example for using a meta-prior for exploration arises in recommender systems, in which exploration is used to assess the latent interests of users for different items, such as movies. In this case, each user can be treated as a bandit instance where the items are arms. A standard prior over user latent interests could readily be used by TS \citep{hong20latent}. However, in many cases, the algorithm designer may be uncertain about its true form. For instance, the designer may believe that most users have strong but noisy affinity for items in exactly one of several classes, but it is unclear which. Our work can be viewed as formalizing the problem of learning such a prior over user interests, which could be used to start exploring the preferences of \say{cold-start} users.

\section{Meta-Thompson Sampling}
\label{sec:meta-thompson sampling}

In this section, we present our approach to meta-learning in TS. We provide a general description in \cref{sec:algorithm meta-ts}. In \cref{sec:categorical meta-prior,sec:gaussian meta-prior}, we implement it in Bernoulli bandits with a categorical meta-prior and Gaussian bandits with a Gaussian meta-prior, respectively. In \cref{sec:measure-theoretic view}, we justify our approach beyond these specific instances.

\subsection{Algorithm \metats}
\label{sec:algorithm meta-ts}

Thompson sampling \citep{thompson33likelihood,chapelle11empirical,agrawal12analysis,russo18tutorial} is arguably the most popular and practical bandit algorithm. TS is parameterized by a prior, which is specified by the algorithm designer. In this work, we study a more general setting where the designer can model uncertainty over an unknown prior $P_*$ using a meta-prior $Q$. Our proposed algorithm meta-learns $P_*$ from sequential interactions with bandit instances drawn i.i.d.\ from $P_*$. Therefore, we call it \emph{meta-Thompson sampling} (\metats). In this subsection, we present \metats under the assumption that sample spaces are discrete. This eases exposition and guarantees that all conditional expectations are properly defined. We treat this topic more rigorously in \cref{sec:measure-theoretic view}.

\metats is a variant of TS that models uncertainty over the instance prior distribution $P_*$. This uncertainty is captured by a meta-posterior $Q_s$, a distribution over possible instance priors. We denote the \emph{meta-posterior} in task $s$ by $Q_s$, and assume that each $Q_s$ belongs to the same family as $Q$. By definition, $Q_1 = Q$ is the \emph{meta-prior}. \metats samples the \emph{instance prior distribution} $P_s$ in task $s$ from $Q_s$. Then it applies TS with the sampled prior $P_s$ to the bandit instance $\theta_{s, *}$ in task $s$ for $n$ rounds. Once the task is complete, it updates the meta-posterior in a standard Bayesian fashion
\begin{align}
  & Q_{s + 1}(P)
  \label{eq:posterior} \\
  & \propto \condprob{H_{1 : s}}{P_* = P} Q(P)
  \nonumber \\
  & = \condprob{H_s}{P_* = P} \prod_{\ell = 1}^{s - 1} \condprob{H_\ell}{P_* = P} Q(P)
  \nonumber \\
  & = \condprob{H_s}{P_* = P} Q_s(P)
  \nonumber \\
  & = \int_\theta \condprob{H_s}{\theta_{s, *} = \theta}
  \condprob{\theta_{s, *} = \theta}{P_* = P} \dif \theta \, Q_s(P)\,,
  \nonumber
\end{align}
where $\condprob{H_s}{P_* = P}$ and $\condprob{H_s}{\theta_{s, *} = \theta}$ are probabilities of observations in task $s$ given that the instance prior is $P$ and the problem instance is $\theta$, respectively. A rigorous justification of this update is given in \cref{sec:measure-theoretic view}. Specific instances of this update are in \cref{sec:categorical meta-prior,sec:gaussian meta-prior}.

The pseudocode for \metats is presented in \cref{alg:meta-ts}. The algorithm is simple, natural, and general; but has two potential shortcomings. First, it is unclear if it can be implemented efficiently. To address this, we develop efficient implementations for both Bernoulli and Gaussian bandits in \cref{sec:categorical meta-prior,sec:gaussian meta-prior}, respectively. Second, it is unclear whether \metats explores enough. Ideally, it should \emph{learn to} perform as well as TS with the true prior $P_*$. Intuitively, we expect this since the meta-posterior samples $P_s \sim Q_s$ should vary significantly in the direction of high variance in $Q_s$, which represents high uncertainty that can be reduced by exploring. We confirm this in \cref{sec:analysis}, where \metats is analyzed in Gaussian bandits.

\begin{algorithm}[t]
  \caption{\metats: Meta-learning Thompson sampling.}
  \label{alg:meta-ts}
  \begin{algorithmic}[1]
    \State \textbf{Inputs:} Meta-prior $Q$
    \Statex
    \State $Q_1 \gets Q$
    \For{$s = 1, \dots, m$}
      \State Sample $P_s \sim Q_s$
      \State Apply Thompson sampling with prior $P_s$ to
      \Statex \hspace{0.175in} problem instance $\theta_{s, *} \sim P_*$ for $n$ rounds
      \State Update meta-posterior $Q_{s + 1}$, as defined in \eqref{eq:posterior}
    \EndFor
  \end{algorithmic}
\end{algorithm}

\subsection{Bernoulli Bandit with a Categorical Meta-Prior}
\label{sec:categorical meta-prior}

Bernoulli Thompson sampling was the first instance of TS that was analyzed \citep{agrawal12analysis}. In this section, we apply \metats to this problem class.

We consider a Bernoulli bandit with $K$ arms that is parameterized by arm means $\theta \in [0, 1]^K$. The reward of arm $i$ in instance $\theta$ is drawn i.i.d.\ from $\mathrm{Ber}(\theta_i)$. To model uncertainty in the prior, we assume access to $L$ potential instance priors $\cP = \set{P^{(j)}}_{j = 1}^L$. Each prior $P^{(j)}$ is factored across the arms as
\begin{align*}
  P^{(j)}(\theta)
  & = \prod_{i = 1}^K \mathrm{Beta}(\theta_i; \alpha_{i, j}, \beta_{i, j}) \\
  & = \prod_{i = 1}^K \frac{\Gamma(\alpha_{i, j} + \beta_{i, j})}
  {\Gamma(\alpha_{i, j}) \Gamma(\beta_{i, j})}
  \theta_i^{\alpha_{i, j} - 1} (1 - \theta_i)^{\beta_{i, j} - 1}
\end{align*}
for some fixed $(\alpha_{i, j})_{i = 1}^K$ and $(\beta_{i, j})_{i = 1}^K$. The \emph{meta-prior} is a categorical distribution over $L$ classes of tasks. That is,
\begin{align*}
  Q(j)
  = \mathrm{Cat}(j; w)
  = w_j
\end{align*}
for $w \in \Delta_{L - 1}$, where $w$ is a vector of initial beliefs into each instance prior and $\Delta_{L - 1}$ is the $L$-dimensional simplex. The tasks are generated as follows. First, the instance prior is set as $P_* = P^{(j_*)}$ where $j_* \sim Q$. Then, in each task $s$, a Bernoulli bandit instance is sampled as $\theta_{s, *} \sim P_*$.

\metats is implemented as follows. The meta-posterior in task $s$ is
\begin{align*}
  Q_s(j)
  = \mathrm{Cat}(j; \hat{w}_s)
  = \hat{w}_{s, j}\,,
\end{align*}
where $\hat{w}_s \in \Delta_{L - 1}$ is a vector of posterior beliefs into each instance prior. The instance prior in task $s$ is $P_s = P^{(j_s)}$ where $j_s \sim Q_s$. After interacting with bandit instance $\theta_{s, *}$, the meta-posterior is updated using $Q_{s + 1}(j) \propto f(j) \, Q_s(j)$, where
\begin{align*}
  f(j)
  = {} & \int_\theta \condprob{H_s}{\theta_{s, *} = \theta}
  \condprob{\theta_{s, *} = \theta}{j_* = j} \dif \theta \\
  = {} & \prod_{i = 1}^K \frac{\Gamma(\alpha_{i, j} + \beta_{i, j})}
  {\Gamma(\alpha_{i, j}) \Gamma(\beta_{i, j})} \times {} \\
  & \int_{\theta_i} \theta_i^{\alpha_{i, j} + N_{i, s}^+ - 1}
  (1 - \theta_i)^{\beta_{i, j} + N_{i, s}^- - 1} \dif \theta_i \\
  = {} & \prod_{i = 1}^K \frac{\Gamma(\alpha_{i, j} + \beta_{i, j})
  \Gamma(\alpha_{i, j} + N_{i, s}^+)
  \Gamma(\beta_{i, j} + N_{i, s}^-)}
  {\Gamma(\alpha_{i, j}) \Gamma(\beta_{i, j})
  \Gamma(\alpha_{i, j} + \beta_{i, j} + T_{i, s})}\,.
\end{align*}
Here $\cA_{i, s} = \set{t \in [n]: A_{s, t} = i}$ is the set of rounds where arm $i$ is pulled in task $s$ and $T_{i, s} = \abs{\cA_{i, s}}$ is the number of these rounds. In addition, $N_{i, s}^+ = \sum_{t \in \cA_{i, s}} Y_{s, t}(i)$ denotes the number of positive observations of arm $i$ and $N_{i, s}^- = T_{i, s} - N_{i, s}^+$ is the number of its negative observations.

The above derivation can be generalized in a straightforward fashion to any categorical meta-prior whose instance priors $P^{(j)}$ lie in some exponential family.

\subsection{Gaussian Bandit with a Gaussian Meta-Prior}
\label{sec:gaussian meta-prior}

Gaussian distributions have many properties that allow for tractable analysis, such as that the posterior variance is independent of observations, which we exploit in \cref{sec:analysis}. In this section, we present a computationally-efficient implementation for this problem class.

We consider a Gaussian bandit with $K$ arms that is parameterized by arm means $\theta \in \realset^K$. The reward of arm $i$ in instance $\theta$ is drawn i.i.d.\ from $\cN(\theta_i, \sigma^2)$. We have a continuum of instance priors, parameterized by a vector of means $\mu \in \realset^K$ and defined as $P(\theta) = \cN(\theta; \mu, \sigma_0^2 I_K)$. The noise $\sigma_0$ is fixed. The \emph{meta-prior} is a Gaussian distribution over instance prior means $Q(\mu) = \cN(\mu; \mathbf{0}, \sigma_q^2 I_K)$, where $\sigma_q$ is assumed to be known. The tasks are generated as follows. First, the instance prior is set as $P_* = \cN(\mu_*, \sigma_0^2 I_K)$ where $\mu_* \sim Q$. Then, in each task $s$, a Gaussian bandit instance is sampled as $\theta_{s, *} \sim P_*$.

\metats is implemented as follows. The meta-posterior in task $s$ is
\begin{align*}
  Q_s(\mu)
  = \cN(\mu; \hat{\mu}_s, \hat{\Sigma}_s)\,,
\end{align*}
where $\hat{\mu}_s \in \realset^K$ is an estimate of $\mu_*$ and $\hat{\Sigma}_s \in \realset^{K \times K}$ is a diagonal covariance matrix. The instance prior in task $s$ is $P_s(\theta) = \cN(\theta; \tilde{\mu}_s, \sigma_0^2 I_K)$ where $\tilde{\mu}_s \sim Q_s$. After interacting with bandit instance $\theta_{s, *}$, the meta-posterior is updated as $Q_{s + 1}(\mu) \propto f(\mu) \, Q_s(\mu)$, where
\begin{align*}
  & f(\mu) \\
  & = \int_\theta \condprob{H_s}{\theta_{s, *} = \theta}
  \condprob{\theta_{s, *} = \theta}{\mu_* = \mu} \dif \theta \\
  & = \prod_{i = 1}^K \int_{\theta_i}
  \left[\prod_{t \in \cA_{i, s}} \cN(Y_{s, t}(i); \theta_i, \sigma^2)\right]
  \cN(\theta_i; \mu_i, \sigma_0^2) \dif \theta_i\,.
\end{align*}
Here $\cA_{i, s} = \set{t \in [n]: A_{s, t} = i}$ is the set of rounds where arm $i$ is pulled in task $s$ and $T_{i, s} = \abs{\cA_{i, s}}$ is the number of such rounds, as in \cref{sec:categorical meta-prior}.

Since $\hat{\Sigma}_s = \diag{\hat{\sigma}_s^2}$ is a diagonal covariance matrix, it is fully characterized by a vector of individual arm variances $\hat{\sigma}_s^2 \in \realset^K$. The parameters $\hat{\mu}_s$ and $\hat{\sigma}_s^2$ are updated, based on \cref{lem:gaussian posterior update} in \cref{sec:technical lemmas}, as
\begin{align*}
  \hat{\mu}_{s + 1, i}
  & = \hat{\sigma}_{s + 1, i}^2
  \left(\frac{\hat{\mu}_{s, i}}{\hat{\sigma}_{s, i}^2} +
  \frac{T_{i, s}}{T_{i, s} \sigma_0^2 + \sigma^2}
  \frac{\sum_{t \in \cA_{i, s}} Y_{s, t}(i)}{T_{i, s}}\right)\,, \\
  \hat{\sigma}_{s + 1, i}^{-2}
  & = \hat{\sigma}_{s, i}^{-2} + \frac{T_{i, s}}{T_{i, s} \sigma_0^2 + \sigma^2}\,.
\end{align*}
This update can be also derived using \eqref{eq:linear bandit posterior} in \cref{sec:bayesian multi-task regression}, when all covariance matrices are assumed to be diagonal.

The above update has a very nice interpretation. The posterior mean $\hat{\mu}_{s + 1, i}$ of arm $i$ is a weighted sum of the mean reward estimate of arm $i$ in task $s$ and the earlier posterior mean $\hat{\mu}_{s, i}$. The weight of the estimate depends on how good it is. Specifically, it varies from $1 / (\sigma_{0, i}^2 + \sigma^2)$, when arm $i$ is pulled only once, to $1 / \sigma_{0, i}^2$, when $T_{i, s} \to \infty$. This is the minimum amount of uncertainty that cannot be reduced by more pulls, due to the fact that $\theta_{s, *}$ is a single observation of the unknown $\mu_*$ with covariance $\sigma_0^2 I_K$.

\subsection{Measure-Theoretic View and the General Case}
\label{sec:measure-theoretic view}

We now present a more general measure-theoretic specification of our meta-bandit setting. Let $\cZ$ be the set of outcomes for the hidden variable $Z$ that is sampled from a meta-prior and $\sigma(\cZ)$ be the $\sigma$-algebra over this set. Similarly, let $\Theta$ be the set of possible bandit environments $\theta \in \Theta$ and $\sigma(\Theta)$ be the $\sigma$-algebra over this set. While in this work we focus on environments characterized only by their mean reward vectors, this parameterization could be more general, and for example include the variance of mean reward vectors. The formal definition of a $K$-armed Bayesian meta-bandit is as follows.

\begin{definition}
A $K$-armed Bayesian meta-bandit is a tuple $\cB=(\cZ, \sigma(\cZ), Q, \Theta, \sigma(\Theta), P, \rho)$, where $(\cZ,\sigma(\cZ))$ is a measurable space; the \emph{meta-prior} $Q$ is a probability measure over $(\cZ,\sigma(\cZ))$; the \emph{prior} $P$ is a probability kernel from $(\cZ,\sigma(\cZ))$ to $(\Theta,\sigma(\Theta))$; and $\rho=(\rho_{\theta,i}: \theta\in\Theta,i\in[K])$ is a probability kernel from $\Theta\times[K]$ to $(\realset,\mathfrak{B}(\realset))$, where $\mathfrak{B}(\realset)$ is the Borel $\sigma$-algebra of $\realset$ and $\rho_{\theta,i}$ is the reward distribution associated with arm $i$ in bandit $\theta$.
\end{definition}

We use lowercase letters to denote realizations of random variables. Let $P_z$ be a distribution of bandit instances under $Z = z$. We assume that a new environment $\theta \in \Theta$ is sampled from the same measure $P_z$ at the beginning of each task with the same realization of hidden variable $Z$ sampled from the meta-prior $Q$ beforehand. 

A bandit algorithm consists of kernels $\pi_{s,t}$ that take as input a history of interactions consisting of the pulled arms and observed rewards up to round $t$ in task $s$, and output a probability measure over the arms. A bandit algorithm is connected with a Bayesian meta-bandit environment $\cB$ to produce a sequence of chosen arms and observed rewards. Formally, let $\Omega_{s,t}= ([K]\times\realset)^{(s-1)n+t-1}\subset\realset^{2((s-1)n+t-1)}$  for each $t\in[n]$ and $s\in[m]$. Then a bandit algorithm or policy is a tuple $\pi=(\pi_{s,t})_{s,t=1}^{m,n}$ such that each $\pi_{s,t}$ is a kernel from $(\Omega_{s,t},\mathfrak{B}(\realset^{2((s-1)n+t-1)}))$ to $([K], 2^{[K]})$ that interacts with a Bayesian meta-bandit $\cB$ over $m$ tasks, each lasting $n$ rounds and producing a sequence of random variables
\begin{align*}
  & A_{1,1},X_{1,1},\dots,A_{1,n},X_{1,n}\,, \dots, \\
  & A_{m,1},X_{m,1}\dots,A_{m,n},X_{m,n}\,,
\end{align*}
where $X_{s,t}=Y_{s,t}(A_{s,t})$ is the reward in round $t$ of task $s$. The probability measure over these variables, $\mathbb{P}_{z,\theta_1,\dots,\theta_m,\pi}$, is guaranteed to exist by the Ionescu-Tulcea theorem \citep{tulcea1949mesures}. Furthermore, the conditional probabilities of transitions of this measure are equal to the kernels
\begin{align*}
&\condprob{\theta_{s}\in\cdot}{
z,\theta_1,\dots,\theta_{s-1},H_{1:s-1}}
= P_z(\theta_{s}\in\cdot)\,, \\
&\condprob{A_{s,t}\in\cdot}{
z,\theta_1,\dots,\theta_s,H_{1:s-1},H_{s,t}} = \\
&\qquad\pi_{s,t}(
A_{s,t}\in\cdot\,|
\,H_{1:s-1},H_{s,t})\,, \\
&\condprob{X_{s,t}\in\cdot}{
z,\theta_1,\dots,\theta_s,
H_{1:s-1},H_{s,t},A_{s,t}} = \\
&\qquad\rho_{\theta_s,A_{s,t}}(X_{s,t}\in\cdot)\,,
\end{align*}
where $H_{s,t}= (X_{s,\ell},A_{s,\ell})_{\ell=1}^{t-1}$. The following lemma says that both the task-posterior $P_z(\cdot|h_{s,t})$ for any $z\in\cZ$ and the meta-posterior $Q(\cdot|h_{1:s-1})$ depend only on the pulled arms according to $\pi$, but not the exact form of $\pi$. 

\begin{restatable}[]{lemma}{measuretheorylemma}
Assume that there exists a $\sigma$-finite measure $\lambda_\rho$ on $(\realset,\mathfrak{B}(\realset))$ such that $\rho_{\theta,i}$ is absolutely continuous with respect to $\lambda_\rho$ for all $\theta\in\Theta$ and $i\in[K]$. Then the task-posterior and meta-posterior exist and have the following form
\begin{align*}
&P_z(S_1|h_{s,t})=
\frac{\int_{S_1} 
\prod_{j=1}^{t-1} p_{\theta_s,a_{s,j}}(x_{s,j})
\dif P_{z}(\theta_s)
}{\int_\Theta 
\prod_{j=1}^{t-1} p_{\theta_s,a_{s,j}}(x_{s,j})
\dif P_{z}(\theta_s)}, \\
&Q(S_2|h_{1:s-1}) = \\
&\quad\frac{
\int_{S_2} 
\left[\prod_{\ell=1}^{s-1} \prod_{j=1}^n 
\int_{\Theta}p_{
\theta,a_{\ell,j}}(x_{\ell,j})
\dif P_z(\theta)
\right] \dif Q(z)}{
\int_\cZ \left[\prod_{\ell=1}^{s-1} \prod_{j=1}^n
\int_{\Theta}p_{
\theta,a_{\ell,j}}(x_{\ell,j})
\dif P_z(\theta)
\right]\dif Q(z)},
\end{align*}
for any $S_1\in\sigma(\Theta),z\in\cZ,S_2\in\sigma(\cZ)$.
\end{restatable}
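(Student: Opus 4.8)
The plan is to build everything from the joint law $\mathbb{P}_{z,\theta_1,\dots,\theta_m,\pi}$ supplied by Ionescu--Tulcea and to read off both posteriors as Radon--Nikodym derivatives of a disintegration, using the absolute-continuity hypothesis to turn the reward kernels into genuine densities. Concretely, first I would fix a dominating measure: on the reward coordinates use $\lambda_\rho$, so that each $\rho_{\theta,i}$ has density $p_{\theta,i}=d\rho_{\theta,i}/d\lambda_\rho$; on the arm coordinates use counting measure on $[K]$; and integrate the latent $\theta_\ell$ against the kernel $P_z$ and the latent $z$ against $Q$. The conditional-transition identities stated just above the lemma then let me write the joint density of a trajectory as a product over tasks and rounds of three factors: the prior increments $dP_z(\theta_\ell)$, the policy probabilities $\pi_{\ell,j}(a_{\ell,j}\mid\cdot)$, and the reward densities $p_{\theta_\ell,a_{\ell,j}}(x_{\ell,j})$. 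The single structural fact I will exploit repeatedly is that, by the second transition identity, $A_{s,t}$ is conditionally independent of $(Z,\theta_1,\dots,\theta_s)$ given the observed history; hence every $\pi$ factor is \emph{free of} $z$ and of the $\theta_\ell$'s.

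For the task-posterior I would condition on $Z=z$ and apply the measure-theoretic Bayes rule to the single unknown $\theta_s$, noting first that, given $z$, $\theta_s$ is independent of the earlier tasks, so only the task-$s$ data matters. Its posterior density against $P_z$ is proportional to the likelihood of $h_{s,t}=(a_{s,j},x_{s,j})_{j<t}$: the reward densities contribute $\prod_{j=1}^{t-1}p_{\theta_s,a_{s,j}}(x_{s,j})$, which depends on $\theta_s$, whereas the policy factors $\prod_j\pi_{s,j}$ do not, so the latter cancel between numerator and normalizing denominator, leaving exactly the stated ratio. The meta-posterior is handled by one more layer of marginalization: given $Z=z$ the $m$ tasks are independent with $\theta_\ell\sim P_z$, so integrating out each completed $\theta_\ell$ produces the per-task marginal likelihood $g_\ell(z)=\int_\Theta\prod_{j=1}^n p_{\theta,a_{\ell,j}}(x_{\ell,j})\,dP_z(\theta)$, and the full likelihood of $h_{1:s-1}$ given $z$ factorizes as $\prod_{\ell=1}^{s-1}g_\ell(z)$ times policy factors independent of $z$. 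Bayes' rule with prior $Q$, after the same cancellation, yields the displayed meta-posterior.

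What remains is to make the two Bayes-rule steps rigorous rather than formal. I would verify that each candidate expression is a genuine regular conditional probability by checking the defining averaging property---that integrating the candidate kernel against the law of the conditioning history reproduces the joint probability of the relevant events---which reduces to an application of Tonelli's theorem once the joint density above is in hand; measurability of the posteriors in the conditioning variables comes from the same Tonelli argument applied to the densities. Two points need care: the denominators must be strictly positive and finite almost surely, which follows because the $p_{\theta,a}$ are densities (so the marginal likelihoods are finite) and because the observed trajectory is generated by the model itself (so the likelihood is a.s.\ positive); and the null-set handling in the disintegration, where I would restrict to the $\sigma$-finite dominating measure in order to invoke a standard Bayes-theorem-for-kernels statement.

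I expect the cancellation of the policy kernels to be conceptually the crux but technically easy once the conditional-independence identity is invoked; the genuine obstacle is the bookkeeping of the disintegration---pinning down the dominating measure, confirming that $\sigma$-finiteness propagates through the product over all $mn$ coordinates, and certifying positivity of the denominators---so that the formal Bayes manipulations are licensed rather than merely suggestive.
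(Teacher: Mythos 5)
Your proposal follows essentially the same route as the paper's proof: both construct the joint law via Ionescu--Tulcea, write its density with respect to the product of counting measure on the arm coordinates and $\lambda_\rho$ on the reward coordinates, and then read off each posterior as a Bayes ratio in which the policy factors cancel between numerator and denominator because they depend on neither $z$ nor the $\theta_\ell$'s. The additional rigor you flag (verifying the averaging property of the candidate kernels and positivity/finiteness of the denominators) is glossed over in the paper's own proof, but the core argument is identical.
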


The proof is provided in \cref{sec:measure-theoretic-view}. \metats samples $Z_s$ from the meta-posterior $Q_s=Q(\cdot|h_{1:s-1})$ at the beginning of each task $s\in[m]$ and then pulls arms according to the samples from $P_{Z_s}(\cdot|h_{s,t})$. The above lemma shows that to compute the posteriors we only need the distributions of rewards and integrate them over the environments (for the task-posterior) or over both the environments and the hidden variables (for the meta-posterior).These integrals can be derived analytically, for example, in the case of conjugate priors \cref{sec:gaussian meta-prior,sec:categorical meta-prior}.

\section{Analysis}
\label{sec:analysis}

We bound the Bayes regret of \metats in Gaussian bandits (\cref{sec:gaussian meta-prior}). This section is organized as follows. We state the bound and sketch its proof in \cref{sec:regret bound}, and discuss it in \cref{sec:discussion}. In \cref{sec:key lemmas}, we present the key lemmas. Finally, in \cref{sec:beyond gaussian bandits}, we discuss how our analysis can be extended beyond Gaussian bandits.

\subsection{Regret Bound}
\label{sec:regret bound}

We analyze \metats under the assumption that each arm is pulled at least once per task. Although this suffices to show benefits of meta-learning, it is conservative. A less conservative analysis would require understanding how Thompson sampling with a misspecified prior pulls arms. In particular, we would require a high-probability lower bound on the number of pulls of each arm. To the best of our knowledge, such as a bound does not exist and is non-trivial to derive. To guarantee that each arm is pulled at least once, we pull each arm in the last $K$ rounds of each task. This is to avoid any interference with our posterior sampling analyses in the earlier rounds. Other than this, \metats is analyzed exactly as described in \cref{sec:algorithm meta-ts,sec:gaussian meta-prior}.

Recall the following definitions in our setting (\cref{sec:gaussian meta-prior}). The meta-prior is $Q = \cN(\mathbf{0}, \sigma_q^2 I_K)$. The instance prior is $P_* = \cN(\mu_*, \sigma_0^2 I_K)$, where $\mu_* \sim Q$ is chosen before the learning agent interact with the tasks. Then, in each task $s$, a problem instance is drawn i.i.d.\ as $\theta_{s, *} \sim P_*$. Our main result is the following Bayes regret bound.

\begin{theorem}
\label{thm:meta-ts regret bound} The Bayes regret of \metats over $m$ tasks with $n$ rounds each is
\begin{align*}
  & R(m, n; P_*) \leq \\
  & \quad c_1 \sqrt{K} \left(\sqrt{n + \sigma^2 \sigma_0^{-2} K} -
  \sqrt{\sigma^2 \sigma_0^{-2} K}\right) m + {} \\
  & \quad c_2(\delta) c_3(\delta) K n^2 \sqrt{m} +
  \tilde{O}(K m + n)
\end{align*}
with probability at least $1 - (2 m + 1) \delta$, where
\begin{align*}
  c_1
  & = 4 \sqrt{2 \sigma^2 \log n}\,, \\
  c_2(\delta)
  & = 2 \left(\sqrt{2 \sigma_q^2 \log(2 K / \delta)} +
  \sqrt{2 \sigma_0^2 \log n}\right)\,, \\
  c_3(\delta)
  & = 8 \sqrt{(\sigma_0^2 + \sigma^2) \log(4 K / \delta) /
  (\pi \sigma_0^2)}\,.
\end{align*}
The probability is over realizations of $\mu_*$, $\theta_{s, *}$, and $\hat{\mu}_s$.
\end{theorem}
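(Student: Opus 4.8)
The plan is to split the Bayes regret of \metats into the regret of an \emph{oracle} that runs Thompson sampling with the true instance prior $P_* = \cN(\mu_*, \sigma_0^2 I_K)$ -- which will produce the first term of the bound -- plus a \emph{meta-learning overhead} measuring how much \metats loses by sampling the prior mean $\tilde\mu_s$ from the meta-posterior $Q_s$ instead of using $\mu_*$ -- which will produce the $c_2 c_3 K n^2 \sqrt m$ term. Concretely, for each task $s$ I would condition on $H_{1:s-1}$ and couple the inner TS run by \metats (prior mean $\tilde\mu_s$) with an imaginary TS run with prior mean $\mu_*$ on the same instance $\theta_{s,*}$ and the same reward realizations, so that the two runs coincide round-by-round until the first time their posterior samples induce different arms.

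First I would establish the oracle term. Using the posterior-sampling identity (conditioned on the within-task history, the pulled arm and the optimal arm are identically distributed under the correct prior), the per-round regret is bounded by the width of a Gaussian confidence interval, i.e.\ a multiple $2\sqrt{2\log n}$ of the posterior standard deviation $\sqrt{\sigma_0^2 \sigma^2 / (\sigma^2 + k\sigma_0^2)}$ after $k$ pulls of the chosen arm. Summing these standard deviations over the $T_i$ pulls of each arm via an integral bound gives $2\sigma(\sqrt{T_i + \sigma^2 \sigma_0^{-2}} - \sqrt{\sigma^2 \sigma_0^{-2}})$, and since $T \mapsto \sqrt{T + \sigma^2 \sigma_0^{-2}} - \sqrt{\sigma^2 \sigma_0^{-2}}$ is concave with $\sum_i T_i \le n$, Jensen's inequality (worst case $T_i = n/K$) yields the per-task bound $c_1 \sqrt K (\sqrt{n + \sigma^2 \sigma_0^{-2} K} - \sqrt{\sigma^2 \sigma_0^{-2} K})$; summing over $m$ tasks gives the first term. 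This is exactly the prior-dependent TS Bayes regret bound advertised in the introduction.

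Next I would control the overhead by the coupling. The two coupled runs agree at a round unless the argmax of a posterior sample flips; because the posterior mean under $P_s$ differs from that under $P_*$ by at most $\|\tilde\mu_s - \mu_*\|_\infty$, and the sampled means have Gaussian densities bounded by $1/(\sqrt{2\pi}\sigma_0)$, a union bound over the $K$ arms shows the per-round flip probability is $O(\|\tilde\mu_s - \mu_*\|_\infty / \sigma_0)$. Union-bounding over the $n$ rounds and charging the full remaining-horizon regret $\le c_2 n$ whenever a flip ever occurs (here $c_2$ bounds the range of the arm means, via $\|\mu_*\|_\infty \lesssim \sqrt{\sigma_q^2 \log(K/\delta)}$ and $\|\theta_{s,*} - \mu_*\|_\infty \lesssim \sqrt{\sigma_0^2 \log n}$) produces the crucial factor $n^2$ per task. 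The discrepancy $\|\tilde\mu_s - \mu_*\|_\infty$ is then bounded using concentration of the meta-posterior: its variance obeys the recursion $\hat\sigma_{s+1,i}^{-2} = \hat\sigma_{s,i}^{-2} + T_{i,s}/(T_{i,s}\sigma_0^2 + \sigma^2)$, and since the forced $K$ pulls guarantee $T_{i,s} \ge 1$, we get $\hat\sigma_{s,i}^2 \lesssim (\sigma_0^2 + \sigma^2)/s$, so $\|\tilde\mu_s - \mu_*\|_\infty \lesssim \sqrt{(\sigma_0^2 + \sigma^2)\log(K/\delta)/s}$. This is the sole place where the ``single pull per task'' conservativeness enters. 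Summing $\sum_{s=1}^m s^{-1/2} \le 2\sqrt m$ converts the per-task overhead into $c_2 c_3 K n^2 \sqrt m$, with $c_3$ collecting the density factor $1/(\sqrt\pi \sigma_0)$ and the $\sqrt{(\sigma_0^2 + \sigma^2)\log(K/\delta)}$ from the discrepancy bound.

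Finally I would collect the high-probability events: one event for $\|\mu_*\|_\infty$ (the ``$+1$''), and for each task one event for the meta-posterior sample $\tilde\mu_s$ and one for the instance $\theta_{s,*}$ and its reward noise (the $2m$), giving the stated $1 - (2m+1)\delta$. The $\tilde O(K m)$ term absorbs the forced exploration in the last $K$ rounds of each task, and the $O(n)$ accounts for the first task, where the meta-posterior is still the diffuse meta-prior and no useful discrepancy bound is available. I expect the main obstacle to be the coupling step: making precise that the only way the two TS runs diverge is through an argmax flip, bounding that flip probability by the Gaussian density (rather than by something horizon-dependent), and justifying that paying $c_2 n$ upon divergence keeps the task overhead at $O(n^2)$ rather than larger -- together with verifying that the posterior-sampling identity and the meta-posterior variance recursion both remain valid once the forced pulls are inserted.
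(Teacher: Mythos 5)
Your proposal follows essentially the same route as the paper's proof: the same per-task decomposition into the regret of an oracle TS that knows $P_*$ plus the reward difference between TS with prior mean $\tilde{\mu}_s$ and TS with prior mean $\mu_*$ (the paper's $R_{s,1} + R_{s,2}$); the same prior-dependent Bayes regret bound for the oracle term (posterior-sampling identity, confidence widths, worst-case equal allocation across arms); the same coupling argument that charges $O(n)$ remaining reward per divergence event to obtain the $K n^2 \varepsilon$ overhead; the same $O(1/\sqrt{s})$ meta-posterior concentration relying on the forced single pull of each arm per task; and the same $(2m+1)\delta$ probability accounting with task $1$ and the forced pulls absorbed into $\tilde{O}(Km + n)$.

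There is, however, one step that is wrong as stated, and it is exactly the step you flagged as the main obstacle. You bound the per-round flip probability by the product of the posterior mean gap, which you take to be at most $\|\tilde{\mu}_s - \mu_*\|_\infty$, and the posterior density peak, which you claim is bounded by $1/(\sqrt{2\pi}\sigma_0)$. The latter is false: the posterior standard deviation of arm $i$ after $N_t(i)$ pulls is $\hat{\sigma}_t = (\sigma_0^{-2} + N_t(i)\sigma^{-2})^{-1/2} \leq \sigma_0$, so the density peak is $1/(\sqrt{2\pi}\hat{\sigma}_t)$, which \emph{grows} like $\sqrt{N_t(i)}/\sigma$ and is not bounded by $1/(\sqrt{2\pi}\sigma_0)$. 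Combining the correct density peak with your loose mean-gap bound $\varepsilon$ gives a flip probability of order $\varepsilon/\hat{\sigma}_t$ per round, and summing over rounds yields an overhead of order $n^{5/2}\varepsilon$ per task rather than $n^2\varepsilon$. The missing idea, which the paper's Lemma 3 uses, is posterior shrinkage of the prior perturbation: because both coupled runs condition on the \emph{same} observations of arm $i$, their posterior means differ by at most $(\hat{\sigma}_t^2/\sigma_0^2)\,\varepsilon$, not merely $\varepsilon$. The product $(1/\hat{\sigma}_t)\cdot(\hat{\sigma}_t^2/\sigma_0^2)\,\varepsilon = (\hat{\sigma}_t/\sigma_0^2)\,\varepsilon \leq \varepsilon/\sigma_0$ then recovers the per-arm total-variation bound $\sqrt{2/(\pi\sigma_0^2)}\,\varepsilon$, after which the rest of your argument goes through unchanged and yields exactly the claimed constants.
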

\begin{proof}
First, we bound the magnitude of $\mu_*$. Specifically, since $\mu_* \sim \cN(\mathbf{0}, \sigma_q^2 I_K)$, we have that
\begin{align}
  \maxnorm{\mu_*}
  \leq \sqrt{2 \sigma_q^2 \log(2 K / \delta)}
  \label{eq:prior scale}
\end{align}
holds with probability at least $1 - \delta$.

Now we fix task $s \geq 2$ and decompose its regret. Let $A_{s, *}$ be the optimal arm in instance $\theta_{s, *}$, $A_{s, t}$ be the pulled arm in round $t$ by TS with misspecified prior $P_s$, and $\tilde{A}_{s, t}$ be the pulled arm in round $t$ by TS with correct prior $P_*$. Then
\begin{align*}
  \condE{\sum_{t = 1}^n \theta_{s, *}(A_{s, *}) - \theta_{s, *}(A_{s, t})}{P_*}
  = R_{s, 1} + R_{s, 2}\,,
\end{align*}
where
\begin{align*}
  R_{s, 1}
  & = \condE{\sum_{t = 1}^n \theta_{s, *}(A_{s, *}) -
  \theta_{s, *}(\tilde{A}_{s, t})}{P_*}\,, \\
  R_{s, 2}
  & = \condE{\sum_{t = 1}^n \theta_{s, *}(\tilde{A}_{s, t}) -
  \theta_{s, *}(A_{s, t})}{P_*}\,.
\end{align*}
The term $R_{s, 1}$ is the regret of hypothetical TS that knows $P_*$. This TS is introduced only for the purpose of analysis and is the optimal policy. The term $R_{s, 2}$ is the difference in the expected $n$-round rewards of TS with priors $P_s$ and $P_*$, and vanishes as the number of tasks $s$ increases.

To bound $R_{s, 1}$, we apply \cref{lem:prior-dependent regret bound} with $\delta = 1 / n$ and get
\begin{align*}
  R_{s, 1}
  \leq {} & 4 \sqrt{2 \sigma^2 K \log n} \times {} \\
  & \left(\sqrt{n + \sigma^2 \sigma_0^{-2} K} - \sqrt{\sigma^2 \sigma_0^{-2} K}\right) +
  \tilde{O}(K)\,,
\end{align*}
where $\tilde{O}(K)$ corresponds to the $c(\delta)$ term in \cref{lem:prior-dependent regret bound}. To bound $R_{s, 2}$, we apply \cref{lem:misspecified prior} with $\delta = 1 / n$ and get
\begin{align*}
  R_{s, 2}
  \leq {} & 2 \left(\maxnorm{\mu_*} + \sqrt{2 \sigma_0^2 \log n}\right)
  \sqrt{\frac{2}{\pi \sigma_0^2}} \times {} \\
  & K n^2 \maxnorm{\tilde{\mu}_s - \mu_*} +
  \tilde{O}(K)\,,
\end{align*}
where $\tilde{O}(K)$ is the first term in \cref{lem:misspecified prior}, after we bound $\maxnorm{\mu_*}$ in it using \eqref{eq:prior scale}.

Now we sum up our bounds on $R_{s, 1} + R_{s, 2}$ over all tasks $s \geq 2$ and get
\begin{align*}
  & c_1 \sqrt{K} \left(\sqrt{n + \sigma^2 \sigma_0^{-2} K} -
  \sqrt{\sigma^2 \sigma_0^{-2} K}\right) m + {} \\
  & c_2(\delta) \sqrt{\frac{2}{\pi \sigma_0^2}} K n^2
  \sum_{s = 2}^m \maxnorm{\tilde{\mu}_s - \mu_*} +
  \tilde{O}(K m)\,,
\end{align*}
where $c_1$ and $c_2(\delta)$ are defined in the main claim. Then we apply \cref{lem:meta-posterior concentration} to each term $\maxnorm{\tilde{\mu}_s - \mu_*}$ and have with probability at least $1 - m \delta$ that
\begin{align*}
  \sum_{s = 2}^m \maxnorm{\tilde{\mu}_s - \mu_*}
  \leq 4 \sqrt{2 (\sigma_0^2 + \sigma^2) m \log(4 K / \delta)}\,,
\end{align*}
where $\sqrt{m}$ arises from summing up the $O(1 / \sqrt{s})$ terms in \cref{lem:meta-posterior concentration}, using \cref{lem:reciprocal root sum} in \cref{sec:technical lemmas}. This concludes the main part of the proof.

We finish with an upper bound on the regret in task $1$ and the cost of pulling each arm once at the end of each task. This is can done as follows. Since $\theta_{s, *} \sim \cN(\mu_*, \sigma_0^2 I_K)$,
\begin{align}
  \maxnorm{\theta_{s, *} - \mu_*}
  \leq \sqrt{2 \sigma_0^2 \log(2 K / \delta)}
  \label{eq:instance scale}
\end{align}
holds with probability at least $1 - \delta$ in any task $s$. From \eqref{eq:prior scale} and \eqref{eq:instance scale}, we have with a high probability that
\begin{align*}
  \maxnorm{\theta_{s, *}}
  & \leq \maxnorm{\theta_{s, *} - \mu_*} + \maxnorm{\mu_*} \\
  & \leq 2 \sqrt{(\sigma_q^2 +\sigma_0^2) \log(2 K / \delta)}\,.
\end{align*}
This yields a high-probability upper bound of
\begin{align*}
  4 \sqrt{(\sigma_q^2 +\sigma_0^2) \log(2 K / \delta)} (n + K m)
\end{align*}
on the regret in task $1$ and pulling each arm once at the end of all tasks. This concludes our proof.
\end{proof}

\subsection{Discussion}
\label{sec:discussion}

If we assume a \say{large $m$ and $n$} regime, where the learning agent improves with more tasks but also needs to perform well in each task, the most important terms in \cref{thm:meta-ts regret bound} are those where $m$ and $n$ interact. Using these terms, our bound can be summarized as
\begin{align}
  \!\!\!\!\!\!
  \sqrt{K} \left[\sqrt{n + \sigma^2 \sigma_0^{-2} K} -
  \sqrt{\sigma^2 \sigma_0^{-2} K}\right] m +
  K n^2 \sqrt{m}
  \label{eq:regret bound}
\end{align}
and holds with probability $1 - (2 m + 1) \delta$ for any $\delta > 0$.

Our bound in \eqref{eq:regret bound} can be viewed as follows. The first term is the regret of Thompson sampling with the correct prior $P_*$. It is linear in the number of tasks $m$, since \metats solves $m$ exploration problems. The second term captures the cost of learning $P_*$. Since it is sublinear in the number of tasks $m$, \metats is near optimal in the regime of \say{large $m$}.

We compare our bound to two baselines. The first baseline is TS with a known prior $P_*$. The regret of this TS can be bounded using \cref{lem:prior-dependent regret bound} and includes only the first term in \eqref{eq:regret bound}. In the regime of \say{large m}, this term dominates the regret of \metats, and thus \metats is near optimal.

The second baseline is \emph{agnostic} Thompson sampling, which does use the structure $\theta_{s, *} \sim P_* \sim Q$. Instead, it marginalizes out $Q$. In our setting, this can be equivalently viewed as assuming $\theta_{s, *} \sim \cN(\mathbf{0}, (\sigma_q^2 + \sigma_0^2) I_K)$. For this prior, the Bayes regret is $\E{R(m, n; P_*)}$, where the expectation is over $P_* \sim Q$. Again, we can apply \cref{lem:prior-dependent regret bound} and show that $\E{R(m, n; P_*)}$ has an upper bound of
\begin{align*}
  \sqrt{K} \left(\sqrt{n + \sigma^2 \tilde{\sigma}^{-1} K} -
  \sqrt{\sigma^2 \tilde{\sigma}^{-1} K}\right) m\,,
\end{align*}
where $\tilde{\sigma}^2 = \sigma_q^2 + \sigma_0^2$. Clearly, $\tilde{\sigma}^2 > \sigma_0^2$; and therefore the difference of the above square roots is always larger than in \eqref{eq:regret bound}. So, in the regime of \say{large m}, \metats has a lower regret than this baseline.

\subsection{Key Lemmas}
\label{sec:key lemmas}

Now we present the three key lemmas used in the proof of \cref{thm:meta-ts regret bound}. They are proved in \cref{sec:regret bound lemmas}.

The first lemma is a prior-dependent upper bound on the Bayes regret of TS.

\begin{restatable}[]{lemma}{priordependentregretbound}
\label{lem:prior-dependent regret bound} Let $\theta_*$ be arm means in a $K$-armed Gaussian bandit that are generated as $\theta_* \sim P_* = \cN(\mu_*, \sigma_0^2 I_K)$. Let $A_*$ be the optimal arm under $\theta_*$ and $A_t$ be the pulled arm in round $t$ by TS with prior $P_*$. Then for any $\delta > 0$,
\begin{align*}
  & \E{\sum_{t = 1}^n \theta_*(A_*) - \theta_*(A_t)}
  \leq c(\delta) + {} \\
  & \quad 4 \sqrt{2 \sigma^2 K \log(1 / \delta)}
  \left(\sqrt{n + \sigma^2 \sigma_0^{-2} K} -
  \sqrt{\sigma^2 \sigma_0^{-2} K}\right)\,,
\end{align*}
where $c(\delta) = 2 \sqrt{2 \sigma_0^2 \log(1 / \delta)} K + \sqrt{2 \sigma_0^2 / \pi} K n \delta$.
\end{restatable}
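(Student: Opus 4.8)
The plan is to follow the confidence-interval analysis of Thompson sampling's Bayes regret in the style of \citet{russo14learning}, specialized to the Gaussian setting, where the key simplification is that the posterior variance has a closed form that is independent of the observed rewards. First I would set up the posterior quantities for the prior $P_* = \cN(\mu_*, \sigma_0^2 I_K)$: let $\cF_{t-1}$ be the history before round $t$, let $N_{t,i}$ be the number of pulls of arm $i$ before round $t$, and recall that the Gaussian posterior variance is $\hat\sigma_{t,i}^2 = \sigma^2 / (\sigma^2 \sigma_0^{-2} + N_{t,i})$, a deterministic function of $N_{t,i}$ alone. With $\hat\mu_{t,i}$ the posterior mean, I define the confidence bounds $U_t(i) = \hat\mu_{t,i} + c_t \hat\sigma_{t,i}$ and $L_t(i) = \hat\mu_{t,i} - c_t \hat\sigma_{t,i}$ for a width $c_t = \sqrt{2 \log(1/\delta)}$; both are $\cF_{t-1}$-measurable.

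Next I would invoke the defining property of Thompson sampling: conditioned on $\cF_{t-1}$, the sampled greedy arm $A_t$ and the true optimal arm $A_*$ are identically distributed, so $\condE{U_t(A_t)}{\cF_{t-1}} = \condE{U_t(A_*)}{\cF_{t-1}}$. This yields the decomposition
\begin{align*}
  \E{\theta_*(A_*) - \theta_*(A_t)}
  = \E{\theta_*(A_*) - U_t(A_*)} + \E{U_t(A_t) - \theta_*(A_t)}\,.
\end{align*}
On the ``good'' event that $\theta_*(i) \in [L_t(i), U_t(i)]$ for every arm, the first term is nonpositive and the second is at most $U_t(A_t) - L_t(A_t) = 2 c_t \hat\sigma_{t, A_t}$. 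A Gaussian tail bound makes the per-arm failure probability at most $\delta$, and on the bad event the excess is controlled by the half-normal mean $\sigma_0 \sqrt{2/\pi} = \sqrt{2 \sigma_0^2 / \pi}$ of a $\cN(0, \sigma_0^2)$ deviation; summing these failure contributions over the $K$ arms and $n$ rounds produces the $\sqrt{2 \sigma_0^2 / \pi}\, K n \delta$ piece of $c(\delta)$.

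The heart of the bound is controlling the cumulative width $\sum_{t=1}^n 2 c_t \hat\sigma_{t, A_t}$. Writing $a = \sigma^2 \sigma_0^{-2}$ and grouping by arm, arm $i$ contributes $\sigma \sum_{k=0}^{N_i - 1} (a + k)^{-1/2}$. Separating the first-pull ($k = 0$) term and bounding the rest by $\int_0^{N_i - 1} (a + x)^{-1/2} \dif x$ gives $\sigma[\, a^{-1/2} + 2(\sqrt{a + N_i} - \sqrt{a})\,]$ per arm. Since $x \mapsto \sqrt{a + x} - \sqrt{a}$ is concave and $\sum_i N_i = n$, Jensen (the worst case $N_i = n/K$) yields $\sum_i (\sqrt{a + N_i} - \sqrt{a}) \leq \sqrt{K}(\sqrt{n + aK} - \sqrt{aK})$. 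Multiplying through by $2 c_t \sigma$ and using $\sigma / \sqrt{a} = \sigma_0$, the separated first-pull terms collapse to exactly $2 \sqrt{2 \sigma_0^2 \log(1/\delta)}\, K$ (the remaining piece of $c(\delta)$), and the concave sum yields the main term $4 \sqrt{2 \sigma^2 K \log(1/\delta)}(\sqrt{n + \sigma^2 \sigma_0^{-2} K} - \sqrt{\sigma^2 \sigma_0^{-2} K})$.

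The posterior-matching identity is standard, so I expect the main obstacle to be purely the deterministic width computation and the constant bookkeeping needed to reach the stated closed form rather than a looser $\tilde{O}(\sqrt{K n})$ bound. In particular, one must keep the additive $a = \sigma^2 \sigma_0^{-2}$ inside the square roots (it encodes that the prior already supplies information worth $\sigma^2 \sigma_0^{-2}$ pulls per arm), combine the per-arm integral bounds correctly via concavity so the $\sqrt{K}$ emerges from the equal-allocation worst case, and match the Gaussian tail constants so that the first-pull width and the failure probability collapse precisely into the two terms of $c(\delta)$.
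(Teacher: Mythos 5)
Your proposal is correct and follows essentially the same route as the paper's proof: the same posterior-matching decomposition with $\cF_{t-1}$-measurable confidence bounds, the same Gaussian tail treatment of the bad event yielding the $\sqrt{2\sigma_0^2/\pi}\,K n \delta$ term, and the same cumulative-width computation with the first-pull terms separated out to give $2\sqrt{2\sigma_0^2\log(1/\delta)}\,K$. The only cosmetic difference is that you justify the equal-allocation worst case via concavity/Jensen, whereas the paper invokes the round-robin worst-case argument (monotonicity of the widths), which is equivalent.
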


The effect of the prior is reflected in the difference of the square roots. As the prior width narrows and $\sigma_0 \to 0$, the difference decreases, which shows that a more concentrated prior leads to less exploration. The algebraic form of the bound is also expected. Roughly speaking, $\sqrt{\sigma^2 \sigma_0^{-2} K}$ is the sum of confidence interval widths in the Bayes regret analysis that cannot occur, because the prior width is $\sigma_0$.

The bound in \cref{lem:prior-dependent regret bound} differs from other similar bounds in the literature \cite{lu19informationtheoretic}. One difference is that the Cauchy-Schwarz inequality is not used in its proof. Therefore, $\sigma_0$ is in the square root instead of the logarithm. The resulting bound is tighter for $\sigma^2 \sigma_0^{-2} K \ll n$. Another difference is that information-theory arguments are not used in the proof. The dependence on $\sigma_0$ is a result of carefully characterizing the posterior variance of $\theta_*$ in each round. Our proof is simple and easy to follow.

The second lemma bounds the difference in the expected $n$-round rewards of TS with different priors.

\begin{restatable}[]{lemma}{misspecifiedprior}
\label{lem:misspecified prior} Let $\theta_*$ be arm means in a $K$-armed Gaussian bandit that are generated as $\theta_* \sim P_* = \cN(\mu_*, \sigma_0^2 I_K)$. Let $\cN(\hat{\mu}, \sigma_0^2 I_K)$ and $\cN(\tilde{\mu}, \sigma_0^2 I_K)$ be two TS priors such that $\maxnorm{\hat{\mu} - \tilde{\mu}} \leq \varepsilon$. Let $\hat{\theta}_t$ and $\tilde{\theta}_t$ be their respective posterior samples in round $t$, $\hat{A}_t$ and $\tilde{A}_t$ be the pulled arms under these samples. Then for any $\delta > 0$,
\begin{align*}
  & \E{\sum_{t = 1}^n \theta_*(\hat{A}_t) - \theta_*(\tilde{A}_t)}
  \leq 4 \left(\sqrt{\frac{\sigma_0^2}{2 \pi}} + \maxnorm{\mu_*}\right)
  K n \delta + {} \\
  & \quad 2 \left(\maxnorm{\mu_*} + \sqrt{2 \sigma_0^2 \log(1 / \delta)}\right)
  \sqrt{\frac{2}{\pi \sigma_0^2}} K n^2 \eps\,.
\end{align*}
\end{restatable}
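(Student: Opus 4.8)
The plan is to \emph{couple} the two runs of Thompson sampling so that they share both the reward realizations and the standard-normal variables used for posterior sampling, and then argue that they behave identically until a low-probability ``near-tie'' forces them to pull different arms. Concretely, I would write the per-round reward gap as $\theta_*(\hat{A}_t) - \theta_*(\tilde{A}_t) \le 2 \maxnorm{\theta_*} \, \I{\hat{A}_t \neq \tilde{A}_t}$, since the gap vanishes whenever the two policies select the same arm. Thus the problem reduces to (i) a bound on the reward scale $\maxnorm{\theta_*}$ and (ii) a bound on the probability that the pulled arms disagree in round $t$.

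For (i), I would use that $\theta_* \sim \cN(\mu_*, \sigma_0^2 I_K)$, so that $\maxnorm{\theta_* - \mu_*} \le \sqrt{2 \sigma_0^2 \log(1 / \delta)}$ holds with probability at least $1 - \delta$ by a Gaussian maximal inequality over the $K$ coordinates, giving $\maxnorm{\theta_*} \le \maxnorm{\mu_*} + \sqrt{2 \sigma_0^2 \log(1 / \delta)}$ on this event. This produces exactly the factor multiplying $K n^2 \eps$ in the claim. On the complementary event, of probability at most $\delta$, I would fall back on the crude bound $\E{\abs{\theta_*(i)}} \le \maxnorm{\mu_*} + \sqrt{\sigma_0^2 / (2 \pi)}$ (the mean of a folded normal), which, paid over $K$ arms and $n$ rounds, yields the additive $K n \delta$ term.

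For (ii), the coupling does the work. As long as the two policies have pulled identical arms through round $t - 1$ they have seen identical data, so their posterior variances coincide and their posterior means differ only through the prior-mean gap; since the Gaussian posterior contracts the prior contribution by a factor at most one (the structural fact that the posterior variance is deterministic, exploited in \cref{sec:gaussian meta-prior}), the sampled vectors satisfy $\maxnorm{\hat{\theta}_t - \tilde{\theta}_t} \le \eps$. Consequently $\hat{A}_t \neq \tilde{A}_t$ forces the top two coordinates of $\tilde{\theta}_t$ to lie within $2 \eps$ of each other, a near-tie. I would bound its probability by a small-ball estimate for the difference of two Gaussian posterior samples, a union bound over the $O(K)$ competing arms, and the density factor $\sqrt{2 / (\pi \sigma_0^2)}$. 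Finally, because the two runs only disagree once a near-tie has occurred, the disagreement probability in round $t$ accumulates the per-round near-tie probabilities over all preceding rounds, so summing $\sum_{t = 1}^n t$ turns a per-round $O(K \eps)$ estimate into the global $K n^2 \eps$ term.

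The main obstacle is the anti-concentration step. A Gaussian density is large precisely when its variance is small, so naively the posterior-sample density could blow up as arms are pulled many times and the within-task posterior variance shrinks. The density bound $\sqrt{2 / (\pi \sigma_0^2)}$ can only be justified by controlling the relevant posterior variance from below, which is exactly where the deliberately conservative ``single pull per arm'' stance of the analysis (noted in \cref{sec:regret bound}) is needed: it keeps the effective sampling variance comparable to the prior variance $\sigma_0^2$ rather than letting it collapse. Pinning down the constants in this small-ball estimate, and correctly charging the cumulative disagreement probability across rounds, are the two places where I expect the real work to lie; the reward-scale and failure-event bounds in (i) are routine.
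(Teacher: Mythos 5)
Your overall architecture (a reward-scale bound on the event $\maxnorm{\theta_* - \mu_*} \le \sqrt{2 \sigma_0^2 \log(1/\delta)}$, plus a per-round disagreement probability of order $K \eps$ accumulated into the $K n^2 \eps$ term) mirrors the paper's proof, and your treatment of step (i) is exactly the paper's. The genuine gap is in the anti-concentration step, and your proposed rescue of it rests on a misreading. Under your coupling, when the two runs share a history in which arm $i$ has been pulled $s_i$ times, the posterior variance is $\hat{\sigma}_i^2 = (\sigma_0^{-2} + s_i \sigma^{-2})^{-1}$ and the posterior means differ by $(\hat{\sigma}_i^2 / \sigma_0^2) \abs{\hat{\mu}_i - \tilde{\mu}_i} \le (\hat{\sigma}_i^2 / \sigma_0^2) \, \eps$. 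If you coarsen this to $\eps$ (your ``factor at most one''), the near-tie probability for a pair $(i, j)$ becomes of order $\eps / \sqrt{\hat{\sigma}_i^2 + \hat{\sigma}_j^2}$, and nothing stops $\hat{\sigma}_i^2$ from collapsing to roughly $\sigma^2 / n$: the single-pull device in \cref{sec:regret bound} is a \emph{lower} bound on pulls (each arm is pulled at least once, in the last $K$ rounds, so the meta-posterior concentrates); it in no way caps the number of within-task pulls of an arm, so it cannot keep the within-task posterior variance ``comparable to $\sigma_0^2$.'' As written, your density factor $\sqrt{2 / (\pi \sigma_0^2)}$ is unjustified, and the bound you would actually obtain degrades by a factor of order $\sqrt{n} \, \sigma_0 / \sigma$.

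The fix is the contraction you mention and then discard. Keep the per-coordinate mean gap as $(\hat{\sigma}_i^2 / \sigma_0^2) \eps$; then the near-tie window for pair $(i, j)$ has width of order $(\hat{\sigma}_i^2 + \hat{\sigma}_j^2) \eps / \sigma_0^2$, while the difference $\tilde{\theta}_t(i) - \tilde{\theta}_t(j)$ has variance $\hat{\sigma}_i^2 + \hat{\sigma}_j^2$, so the near-tie probability is of order $\eps \sqrt{\hat{\sigma}_i^2 + \hat{\sigma}_j^2} / \sigma_0^2 \le \sqrt{2} \, \eps / \sigma_0$ because $\hat{\sigma}_i \le \sigma_0$ --- the density blow-up is exactly cancelled by the contraction of the shift. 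This is precisely the mechanism in the paper's proof, which avoids the coupling/near-tie route altogether: it bounds the per-round disagreement by the total variation distance between the two factored, equal-variance Gaussian posteriors, $\int_{\theta_i} \abs{p(\theta_i) - q(\theta_i)} \dif \theta_i \le \frac{2}{\sqrt{2 \pi \hat{\sigma}^2}} \cdot \frac{\hat{\sigma}^2 \eps}{\sigma_0^2} \le \sqrt{2 / (\pi \sigma_0^2)} \, \eps$, where the posterior-mean contraction $\hat{\sigma}^2 / \sigma_0^2$ cancels the growing density $1 / \sqrt{2 \pi \hat{\sigma}^2}$. With that repair, your shared-randomness coupling plus near-tie argument goes through and is a legitimate alternative to the paper's total-variation computation; without it, the key step fails.
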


The key dependence in \cref{lem:misspecified prior} is that the bound is linear in the difference of prior means $\eps$. The bound is also $O(n^2)$. Although this is unfortunate, it cannot be improved in general if we want to keep linear dependence on $\eps$. The $O(n^2)$ dependence arises in the proof as follows. We bound the difference in the expected $n$-round rewards of TS with two different priors by the probability that the two TS instances deviate in each round multiplied by the maximum reward that can be earned from that round. The probability is $O(\eps)$ and the maximum reward is $O(n)$. This bound is applied $n$ times, in each round, and thus the $O(n^2 \eps)$ dependence.

The last lemma shows the concentration of meta-posterior sample means.

\begin{restatable}[]{lemma}{metaposteriorconcentration}
\label{lem:meta-posterior concentration} Let $\mu_* \sim \cN(\mathbf{0}, \sigma_q^2 I_K)$ and the prior parameters in task $s$ be sampled as $\tilde{\mu}_s \mid H_{1 : s - 1} \sim \cN(\hat{\mu}_s, \hat{\Sigma}_s)$. Then
\begin{align*}
  \maxnorm{\tilde{\mu}_s - \mu_*}
  \leq 2 \sqrt{2 \frac{\sigma_0^2 + \sigma^2}{(\sigma_0^2 + \sigma^2) \sigma_q^{-2} + s - 1}
  \log(4 K / \delta)}
\end{align*}
holds jointly over all tasks $s \in [m]$ with probability at least $1 - m \delta$.
\end{restatable}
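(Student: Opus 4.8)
The plan is to reduce the claim to a pair of elementary Gaussian tail bounds, exploiting the special feature of the Gaussian model that the meta-posterior covariance is \emph{deterministic}. First I would unroll the variance recursion from \cref{sec:gaussian meta-prior}. Under the conservative single-pull convention ($T_{i,s}=1$), the update $\hat{\sigma}_{s+1,i}^{-2} = \hat{\sigma}_{s,i}^{-2} + T_{i,s}/(T_{i,s}\sigma_0^2+\sigma^2)$ collapses to $\hat{\sigma}_{s+1,i}^{-2} = \hat{\sigma}_{s,i}^{-2} + 1/(\sigma_0^2+\sigma^2)$, and starting from $\hat{\sigma}_{1,i}^{-2}=\sigma_q^{-2}$ (since $Q_1=Q$) it telescopes to $\hat{\sigma}_{s,i}^{-2} = \sigma_q^{-2} + (s-1)/(\sigma_0^2+\sigma^2)$. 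Inverting gives the per-coordinate posterior variance $\hat{\sigma}_{s,i}^2 = (\sigma_0^2+\sigma^2)/\big((\sigma_0^2+\sigma^2)\sigma_q^{-2}+s-1\big)$, which is exactly the quantity inside the square root of the claimed bound and, crucially, is the \emph{same for every coordinate} $i$ and does not depend on the observed rewards.

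The heart of the argument is a Bayesian observation: conditioned on the history $H_{1:s-1}$, the true mean vector $\mu_*$ is distributed according to the meta-posterior, i.e. $\mu_* \mid H_{1:s-1} \sim Q_s = \cN(\hat{\mu}_s, \hat{\Sigma}_s)$. This is the meta-posterior lemma of \cref{sec:measure-theoretic view} specialized to the Gaussian conjugate model. Since $\tilde{\mu}_s$ is drawn from the same $Q_s$ and is, given the history, independent of $\mu_*$, I would write $\tilde{\mu}_s - \mu_* = (\tilde{\mu}_s - \hat{\mu}_s) + (\hat{\mu}_s - \mu_*)$ and note that, conditionally on $H_{1:s-1}$, each summand is a centered Gaussian vector with diagonal covariance $\hat{\Sigma}_s$; in particular every coordinate of each summand is $\cN(0,\hat{\sigma}_{s,i}^2)$.

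I would then finish with standard Gaussian tails and a union bound. For $X\sim\cN(0,v)$ one has $\prob{\abs{X}\geq\sqrt{2v\log(2/p)}}\leq p$; taking $p=\delta/(2K)$ and $v=\hat{\sigma}_{s,i}^2$ bounds each coordinate of each summand by $\sqrt{2\hat{\sigma}_{s,i}^2\log(4K/\delta)}$ off an event of probability at most $\delta/(2K)$. There are $K$ coordinates and two summands, so a union bound over these $2K$ events controls task $s$ up to failure probability $\delta$, and a further union bound over $s\in[m]$ yields the stated $1-m\delta$. On the good event the triangle inequality adds the two coordinate bounds, and because $\hat{\sigma}_{s,i}^2$ is independent of $i$, taking the maximum over coordinates gives $\maxnorm{\tilde{\mu}_s - \mu_*} \leq 2\sqrt{2\hat{\sigma}_{s,i}^2\log(4K/\delta)}$, which is the claim after substituting the closed form for $\hat{\sigma}_{s,i}^2$.

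The main obstacle is justifying the conditional law $\mu_* \mid H_{1:s-1}\sim\cN(\hat{\mu}_s,\hat{\Sigma}_s)$ rigorously, since $\hat{\mu}_s$ is itself a random function of the history and $\tilde{\mu}_s$ must be treated as conditionally independent of $\mu_*$ given that history; this is precisely where the measure-theoretic development of \cref{sec:measure-theoretic view} is needed. The second, subtler ingredient is the Gaussian-specific fact that $\hat{\Sigma}_s$ is deterministic (it depends only on the pull counts, fixed to one per arm), which is what lets me treat $v=\hat{\sigma}_{s,i}^2$ as a constant inside the tail bound rather than a random variance that would otherwise have to be conditioned on. Finally, I would flag the monotonicity point behind the conservative $T_{i,s}=1$ replacement: using a single pull only inflates $\hat{\sigma}_{s,i}^2$ relative to the true posterior variance, so the resulting deviation bound remains a valid upper bound.
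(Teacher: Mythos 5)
Your proposal is correct and follows essentially the same route as the paper's proof: the key Bayesian fact that $\mu_*$ and $\tilde{\mu}_s$ are both conditionally $\cN(\hat{\mu}_s, \hat{\Sigma}_s)$ given $H_{1:s-1}$, a triangle-inequality split through $\hat{\mu}_s$, per-coordinate Gaussian tails with union bounds over coordinates and tasks, and the at-least-one-pull-per-task bound on $\hat{\sigma}_{s,i}^2$ (which the paper invokes as a ``trivial upper bound'' rather than unrolling the recursion, exactly matching your monotonicity remark). The only differences are bookkeeping: you allocate failure probability $\delta/(2K)$ per coordinate while the paper allocates $\delta/2$ per max-norm event, yielding the identical final bound.
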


The key dependence is that the bound is $O(1 / \sqrt{s})$ in task $s$, which provides an upper bound on $\eps$ in \cref{lem:misspecified prior}. After we sum up these upper bounds over all $s \in [m]$ tasks, we get the $O(\sqrt{m})$ term in \cref{thm:meta-ts regret bound}.

\subsection{Beyond Gaussian Bandits}
\label{sec:beyond gaussian bandits}

We analyze Gaussian bandits with a known prior covariance matrix (\cref{sec:regret bound}) because this simplifies algebra and is easy to interpret. We believe that a similar analysis can be conducted for other bandit problems based on the following high-level interpretation of our key lemmas (\cref{sec:key lemmas}).

\cref{lem:prior-dependent regret bound} says that more a concentrated prior in TS yields lower regret. This is expected in general, as less uncertainty about the problem instance leads to lower regret.

\cref{lem:misspecified prior} says that the difference in the expected $n$-round rewards of TS with different priors can be bounded by the difference of the prior parameters. This is expected for any prior that is smooth in its parameters.

\cref{lem:meta-posterior concentration} says that the meta-posterior concentrates as the number of tasks increases. When each arm is pulled at least once per task, as we assume in \cref{sec:regret bound}, \metats gets at least one noisy observation of the prior per task, and any exponential-family meta-posterior would concentrate.

\section{Experiments}
\label{sec:experiments}

We experiment with three problems. In each problem, we have $m = 20$ tasks with a horizon of $n = 200$ rounds. All results are averaged over $100$ runs, where $P_* \sim Q$ in each run.

\begin{figure*}[t]
  \centering
  \includegraphics[width=6.6in]{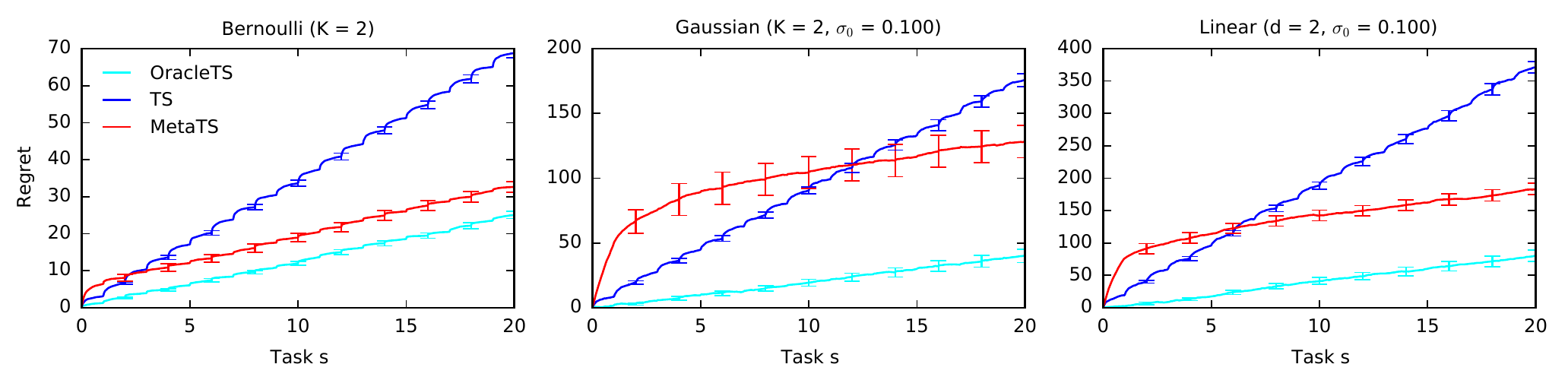}
  \vspace{-0.1in}
  \caption{Comparison of \metats to two variants of Thompson sampling, where the instance prior $P_*$ is known (\oraclets) and the meta-prior $Q$ is marginalized out (\ts).}
  \label{fig:synthetic}
\end{figure*}

The first problem is a Bernoulli bandit with $K = 2$ arms and a categorical meta-prior (\cref{sec:categorical meta-prior}). We have $L = 2$ instance priors, which are defined as
\begin{align*}
  P^{(1)}(\theta)
  & = \mathrm{Beta}(\theta_1; 6, 2) \, \mathrm{Beta}(\theta_2; 2, 6)\,, \\
  P^{(2)}(\theta)
  & = \mathrm{Beta}(\theta_1; 2, 6) \, \mathrm{Beta}(\theta_2; 6, 2)\,.
\end{align*}
In instance prior $P^{(1)}$, arm $1$ is more likely to be optimal than arm $2$, while arm $1$ is more likely to be optimal in prior $P^{(2)}$. The meta-prior is a categorical distribution $\mathrm{Cat}(w)$ where $w = (0.5, 0.5)$. This problem is designed such that if the agent knew $P_*$, it would know the optimal arm with high probability, and could significantly reduce exploration in future tasks.

The second problem is a Gaussian bandit with $K = 2$ arms and a Gaussian meta-prior (\cref{sec:gaussian meta-prior}). The meta-prior width is $\sigma_q = 0.5$, the instance prior width is $\sigma_0 = 0.1$, and the reward noise is $\sigma = 1$. In this problem, $\sigma_q \gg \sigma_0$ and we expect major gains from meta-learning. In particular, based on our discussion in \cref{sec:discussion},
\begin{align*}
  5.85
  & \approx \sqrt{n + \sigma^2 \sigma_0^{-2} K} - \sqrt{\sigma^2 \sigma_0^{-2} K} \\
  & < \sqrt{n + \sigma^2 (\sigma_0^2 + \sigma_q^2)^{-1} K} -
  \sqrt{\sigma^2 (\sigma_0^2 + \sigma_q^2)^{-1} K} \\
  & \approx 11.63\,.
\end{align*}
The third problem is a linear bandit in $d = 2$ dimensions with $K = 5 d$ arms. We sample arm features uniformly at random from $[-0.5, 0.5]^d$. The meta-prior, prior, and noise are set as in the Gaussian experiment. The main difference is that $\theta_{s, *}$ is a parameter vector of a linear model, where the mean reward of arm $x$ is $x\T \theta_{s, *}$. The posterior updates are computed as described in \cref{sec:bayesian multi-task regression}. Even in this more complex setting, they have a closed form.

We compare \metats to two baselines. The first, \oraclets, is idealized TS with the true prior $P_*$. This baseline shows the lowest attainable regret. The second baseline is agnostic TS, which does not use the structure of our problem. In the Gaussian and linear bandit experiments, we implement it as TS with prior $\cN(\theta; \mathbf{0}, (\sigma_q^2 + \sigma_0^2) I_K)$, as this is a marginal distribution of $\theta_{s, *}$. In the Bernoulli bandit experiment, we use an uninformative prior $\prod_{i = 1}^K \mathrm{Beta}(\theta_i; 1, 1)$, since the marginal distribution does not have a closed form. We call this baseline \ts.

Our results are reported in \cref{fig:synthetic}. We plot the cumulative regret as a function of the number of experienced tasks $s$, as it accumulates round-by-round within tasks. The regret of algorithms that do not learn $\mu_*$, such as \oraclets and \ts, is linear in $s$, since they solve $s$ similar tasks using the same policy (\cref{sec:setting}). A lower slope of the regret indicates a better policy. Since \oraclets is optimal in our problems, no algorithm can have sublinear regret in $s$.

In all plots in \cref{fig:synthetic}, we observe significant gains due to meta-learning $P_*$. \metats outperforms \ts, which does not adapt to $P_*$ and performs comparably to \oraclets. This can be seen from the slope of the regret. Specifically, the slope of the \metats regret approaches that of \oraclets after just a few tasks. The slopes of \ts and \oraclets do not change, as these methods do not adapt between tasks.

In \cref{sec:additional experiments}, we report additional experimental results. We observe that the benefits of meta-learning are preserved as the number of arms $K$ or dimensions $d$ increases. However, as is expected, they diminish when the prior width $\sigma_0$ approaches the meta-prior width $\sigma_q$. In this case, there is little benefit from adapting to $P_*$ and all methods perform similarly. We also experiment with misspecified \metats and show that the impact of the misspecification is relatively minor. This attests to the robustness of \metats.

\section{Related Work}
\label{sec:related work}

The closest related work is that of \citet{bastani19meta} who propose TS that learns an instance prior from a sequence of pricing experiments. Their approach is tailored to pricing and learns through forced exploration using a conservative variant of TS, resulting in a meta-learning algorithm that is more conservative and less general than our work. \citet{bastani19meta} also do not derive improved regret bounds due to meta-learning.

\metats is an instance of meta-learning \citep{thrun96explanationbased,thrun98lifelong,baxter98theoretical,baxter00model}, where the agent learns to act under an unknown prior $P_*$ from interactions with bandit instances. Earlier works on a similar topic are \citet{azar13sequential} and \citet{gentile14online}, who proposed UCB algorithms for multi-task learning in the bandit setting. Multi-task learning in contextual bandits, where the arms are similar tasks, was studied by \citet{deshmukh17multitask}. \citet{cella20metalearning} proposed a \linucb algorithm that meta-learns mean parameter vectors in linear models. \citet{yang20provable} studied a setting where the learning agent interacts with multiple bandit instances in parallel and tries to learn their shared subspace. A general template for sequential meta-learning is outlined in \citet{ortega19metalearning}. Our work departs from most of the above approaches in two aspects. First, we have a posterior sampling algorithm that naturally represents the uncertainty in the unknown prior $P_*$. Second, we have a Bayes regret analysis. The shortcoming of the Bayes regret is that it is a weaker optimality criterion than the frequentist regret. To the best of our knowledge, this is the first work to propose meta-learning for Thompson sampling that is natural and has provable guarantees on improvement.

It is well known that the regret of bandit algorithms can be reduced by tuning \citep{vermorel05multiarmed,maes12metalearning,kuleshov14algorithms,hsu19empirical}. All of these works are empirical and focus on the offline setting, where the bandit algorithms are optimized against a known instance distribution. Several recent approaches formulated learning of bandit policies as policy-gradient optimization \citep{duan16rl2,boutilier20differentiable,kveton20differentiable,yang20differentiable,min20policy}. Notably, both \citet{kveton20differentiable} and \citet{min20policy} proposed policy-gradient optimization of TS. These works are in the offline setting and have no global optimality guarantees, except for some special cases \citep{boutilier20differentiable,kveton20differentiable}.

\section{Conclusions}
\label{sec:conclusions}

Thompson sampling \citep{thompson33likelihood}, a very popular and practical bandit algorithm \cite{chapelle11empirical,agrawal12analysis,russo18tutorial}, is parameterized by a prior, which is specified by the algorithm designer. We study a more general setting where the designer can specify an uncertain prior, and the actual prior is learned from sequential interactions with bandit instances. We propose \metats, a computationally-efficient algorithm for this problem. Our analysis of \metats shows the benefit of meta-learning and builds on a novel prior-dependent upper bound on the Bayes regret of Thompson sampling. \metats shows considerable promise in our synthetic experiments.

Our work is a step in the exciting direction of meta-learning state-of-the-art exploration algorithms with guarantees. It has several limitations that should be addressed in future work. First, our regret analysis relies only on a single pull of an arm per task. While this simplifies the analysis and is sufficient to show improvements due to meta-learning, it is inherently conservative. Second, our analysis is limited to Gaussian bandits and relies heavily on the properties of Gaussian posteriors. While we believe that a generalization is possible (\cref{sec:beyond gaussian bandits}), it is likely to be more algebraically demanding. Finally, we hope to analyze our method in contextual bandits. As we show in \cref{sec:bayesian multi-task regression}, meta-posterior updates in linear bandits with Gaussian noise have a closed form. We believe that our work lays foundations for a potential analysis of this approach, which could yield powerful contextual bandit algorithms that adapt to an unknown problem class.

\bibliographystyle{icml2021}
\bibliography{References}

\clearpage
\onecolumn
\appendix

\section{Regret Bound Lemmas}
\label{sec:regret bound lemmas}

\priordependentregretbound*
\begin{proof}
Let $\hat{\theta}_t \in \realset^K$ be the MAP estimate of $\theta_*$ in round $t$, $\theta_t \in \realset^K$ be the posterior sample in round $t$, and $H_t$ denote the history in round $t$. Note that in posterior sampling, $\condprob{\theta_t = \theta}{H_t} = \condprob{\theta_* = \theta}{H_t}$ for all $\theta$. Let $A_*$ be the optimal arm under $\theta_*$ and $A_t$ be the optimal arm under $\theta_t$.

We rely on several properties of Gaussian posterior sampling with a diagonal prior covariance matrix. More specifically, the posterior distribution in round $t$ is $\cN(\hat{\theta}_t, \Sigma_t)$, where $\Sigma_t = \diag{(\sigma_{t, i}^2)_{i = 1}^K}$ is a diagonal covariance matrix with non-zero entries
\begin{align*}
  \sigma_{t, i}^2
  = \frac{1}{\sigma_0^{-2} + N_t(i) \sigma^{-2}}
  = \frac{\sigma^2}{\sigma^2 \sigma_0^{-2} + N_t(i)}\,,
\end{align*}
and $N_t(i)$ denotes the number of pulls of arm $i$ up to round $t$. Accordingly, a high-probability confidence interval of arm $i$ in round $t$ is $C_t(i) = \sqrt{2 \sigma_{t, i}^2 \log(1 / \delta)}$, where $\delta > 0$ is the confidence level. Let
\begin{align*}
  E_t
  = \set{\forall i \in [K]: \abs{\theta_*(i) -  \hat{\theta}_t(i)} \leq C_t(i)}
\end{align*}
be the event that all confidence intervals in round $t$ hold.

Now we bound the regret in round $t$. Fix round $t$. The regret can be decomposed as
\begin{align*}
  \E{\theta_*(A_*) - \theta_*(A_t)}
  & = \E{\condE{\theta_*(A_*) - \theta_*(A_t)}{H_t}} \\
  & = \E{\condE{\theta_*(A_*) - \hat{\theta}_t(A_*) - C_t(A_*)}{H_t}} +
  \E{\condE{\hat{\theta}_t(A_t) + C_t(A_t) - \theta_*(A_t)}{H_t}}\,.
\end{align*}
The first equality is an application of the tower rule. The second equality holds because $A_t \mid H_t$ and $A_* \mid H_t$ have the same distributions, and $\hat{\theta}_t$ and $C_t$ are deterministic given history $H_t$.

We start with the first term in the decomposition. Fix history $H_t$. Then we introduce event $E_t$ and get
\begin{align*}
  \condE{\theta_*(A_*) - \hat{\theta}_t(A_*) - C_t(A_*)}{H_t}
  & = \condE{\theta_*(A_*) - \hat{\theta}_t(A_*)}{H_t} - \condE{C_t(A_*)}{H_t} \\
  & \leq \condE{(\theta_*(A_*) - \hat{\theta}_t(A_*)) \I{\bar{E}_t}}{H_t}\,,
\end{align*}
where the inequality follows from the observation that
\begin{align*}
  \condE{(\theta_*(A_*) - \hat{\theta}_t(A_*)) \I{E_t}}{H_t}
  \leq \condE{C_t(A_*)}{H_t}\,.
\end{align*}
Since $\theta_* - \hat{\theta}_t \mid H_t \sim \cN(\mathbf{0}, \Sigma_t)$, we further have
\begin{align}
  \condE{(\theta_*(A_*) - \hat{\theta}_t(A_*)) \I{\bar{E}_t}}{H_t}
  & \leq \sum_{i = 1}^K \frac{1}{\sqrt{2 \pi \sigma_{t, i}^2}}
  \int_{x = C_t(i)}^\infty x
  \exp\left[- \frac{x^2}{2 \sigma_{t, i}^2}\right] \dif x
  \nonumber \\
  & = \sum_{i = 1}^K - \sqrt{\frac{\sigma_{t, i}^2}{2 \pi}}
  \int_{x = C_t(i)}^\infty \frac{\partial}{\partial x}
  \left(\exp\left[- \frac{x^2}{2 \sigma_{t, i}^2}\right]\right) \dif x
  \nonumber \\
  & = \sum_{i = 1}^K \sqrt{\frac{\sigma_{t, i}^2}{2 \pi}} \delta
  \leq \sqrt{\frac{\sigma_0^2}{2 \pi}} K \delta\,.
  \label{eq:bayes regret scale}
\end{align}
For the second the term in the regret decomposition, we have
\begin{align*}
  \condE{\hat{\theta}_t(A_t) + C_t(A_t) - \theta_*(A_t)}{H_t}
  \leq 2 \condE{C_t(A_t)}{H_t} +
  \condE{(\hat{\theta}_t(A_t) - \theta_*(A_t)) \I{\bar{E}_t}}{H_t}\,,
\end{align*}
where the inequality follows from the observation that
\begin{align*}
  \condE{(\hat{\theta}_t(A_t) - \theta_*(A_t)) \I{E_t}}{H_t}
  \leq \condE{C_t(A_t)}{H_t}\,.
\end{align*}
The other term is bounded as in \eqref{eq:bayes regret scale}. Now we chain all inequalities for the regret in round $t$ and get
\begin{align*}
  \E{\theta_*(A_*) - \theta_*(A_t)}
  \leq 2 \E{C_t(A_t)} + \sqrt{\frac{2 \sigma_0^2}{\pi}} K \delta\,.
\end{align*}
Therefore, the $n$-round Bayes regret is bounded as
\begin{align*}
  \E{\sum_{t = 1}^n \theta_*(A_*) - \theta_*(A_t)}
  \leq 2 \E{\sum_{t = 1}^n C_t(A_t)} +
  \sqrt{\frac{2 \sigma_0^2}{\pi}} K n \delta\,.
\end{align*}
The last part is to bound $\E{\sum_{t = 1}^n C_t(A_t)}$ from above. Since the confidence interval $C_t(i)$ decreases with each pull of arm $i$, $\sum_{t = 1}^n C_t(A_t)$ is bounded for any $\theta_*$ by pulling arms in a round robin \citep{russo14learning}, which yields
\begin{align*}
  \E{\sum_{t = 1}^n C_t(A_t)}
  & \leq \sqrt{2 \sigma^2 \log(1 / \delta)} K
  \sum_{s = 0}^{\floors{n / K}} \sqrt{\frac{1}{s + \sigma^2 \sigma_0^{-2}}} \\
  & = \sqrt{2 \sigma^2 \log(1 / \delta)} K
  \left(\sum_{s = 1}^{\floors{n / K}} \sqrt{\frac{1}{s + \sigma^2 \sigma_0^{-2}}} +
  \frac{\sigma_0}{\sigma}\right) \\
  & \leq 2 \sqrt{2 \sigma^2 \log(1 / \delta)} K
  \left(\sqrt{\frac{n}{K} + \sigma^2 \sigma_0^{-2}} -
  \sqrt{\sigma^2 \sigma_0^{-2}}\right) +
  \sqrt{2 \sigma_0^2 \log(1 / \delta)} K \\
  & = 2 \sqrt{2 \sigma^2 K \log(1 / \delta)}
  \left(\sqrt{n + \sigma^2 \sigma_0^{-2} K} -
  \sqrt{\sigma^2 \sigma_0^{-2} K}\right) +
  \sqrt{2 \sigma_0^2 \log(1 / \delta)} K\,.
\end{align*}
The second inequality follows from \cref{lem:reciprocal root sum}. Now we chain all inequalities and this completes the proof.
\end{proof}

\misspecifiedprior*
\begin{proof}
First, we bound the regret when $\theta_*$ is not close to $\mu_*$. Let
\begin{align*}
  E
  = \set{\forall i \in [K]: \abs{\theta_*(i) -  \mu_*(i)} \leq c}
\end{align*}
be the event that $\theta_*$ is close to $\mu_*$, where $c = \sqrt{2 \sigma_0^2 \log(1 / \delta)}$ is the corresponding confidence interval. Then
\begin{align*}
  \E{\I{\bar{E}} \sum_{t = 1}^n \theta_*(\hat{A}_t) - \theta_*(\tilde{A}_t)}
  \leq 2 n \E{\I{\bar{E}} \max_{i \in [K]} \abs{\theta_*(i)}}\,.
\end{align*}
The expectation can be further bounded as
\begin{align*}
  \E{\I{\bar{E}} \max_{i \in [K]} \abs{\theta_*(i)}}
  & \leq \sum_{i = 1}^K
  \frac{1}{\sqrt{2 \pi \sigma_0^2}} \int_{x = c}^\infty
  x \exp\left[- \frac{x^2}{2 \sigma_0^2}\right] \dif x +
  \abs{\mu_*(i)} \exp\left[- \frac{c^2}{2 \sigma_0^2}\right] \\
  & = \sum_{i = 1}^K
  - \sqrt{\frac{\sigma_0^2}{2 \pi}} \int_{x = c}^\infty \frac{\partial}{\partial x}
  \left(\exp\left[- \frac{x^2}{2 \sigma_0^2}\right]\right) \dif x +
  \abs{\mu_*(i)} \exp\left[- \frac{c^2}{2 \sigma_0^2}\right] \\
  & \leq \left(\sqrt{\frac{\sigma_0^2}{2 \pi}} + \maxnorm{\mu_*}\right) K \delta\,.
\end{align*}
In the first inequality, we use that $\theta_*(i) -  \mu_*(i)$ is $\sigma_0^2$-sub-Gaussian. Now we combine the above two inequalities and get
\begin{align*}
  \E{\sum_{t = 1}^n \theta_*(\hat{A}_t) - \theta_*(\tilde{A}_t)}
  & \leq \E{\I{E} \sum_{t = 1}^n \theta_*(\hat{A}_t) - \theta_*(\tilde{A}_t)} +
  4 \left(\sqrt{\frac{\sigma_0^2}{2 \pi}} + \maxnorm{\mu_*}\right) K n \delta\,.
\end{align*}
The main challenge in bounding the first term above is that the posterior distributions of $\hat{A}_t$ and $\tilde{A}_t$ may deviate significantly if their histories do.

We get a bound that depends on the difference of prior means $\eps$ based on the following observation. In round $1$, both TS algorithms behave identically, on average over the posterior samples, in $\sum_{i = 1}^K \min \set{\prob{\hat{A}_1 = i}, \, \prob{\tilde{A}_1 = i}}$ fraction of runs. In this case, the expected regret in round $1$ is zero and the algorithms have the same history distributions in round $2$, on average over the posterior samples. Otherwise, in $\sum_{i = 1}^K \abs{\prob{\hat{A}_1 = i} - \prob{\tilde{A}_1 = i}}$ fraction of runs, the algorithms behave differently and we bound the difference of their future rewards trivially by $2 (\maxnorm{\mu_*} + c) n$. Now we apply this bound from round $2$ to $n$, conditioned on both algorithms having the same history distributions, and get
\begin{align*}
  \E{\I{E} \sum_{t = 1}^n \theta_*(\hat{A}_t) - \theta_*(\tilde{A}_t)}
  & \leq 2 (\maxnorm{\mu_*} + c) n^2
  \max_{t \in [n], \, h \in \cH_t} \sum_{i = 1}^K
  \abs{\condprob{\hat{A}_t = i}{\hat{H}_t = h} -
  \condprob{\tilde{A}_t = i}{\tilde{H}_t = h}}\,,
\end{align*}
where $\hat{H}_t$ is the history for $\hat{A}_t$, $\tilde{H}_t$ is the history for $\tilde{A}_t$, and $\cH_t$ is the set of all possible histories in round $t$. Finally, we bound the last term above using $\eps$.

Fix round $t$ and history $h \in \cH_t$. Let $p(\theta) = \condprob{\hat{\theta}_t = \theta}{\hat{H}_t = h}$ and $q(\theta) = \condprob{\tilde{\theta}_t = \theta}{\tilde{H}_t = h}$. Then, since the pulled arms are deterministic functions of their posterior samples, we have
\begin{align*}
  \sum_{i = 1}^K
  \abs{\condprob{\hat{A}_t = i}{\hat{H}_t = h} -
  \condprob{\tilde{A}_t = i}{\tilde{H}_t = h}}
  \leq \int_\theta \abs{p(\theta) - q(\theta)} \dif \theta\,.
\end{align*}
Moreover, since the posterior distributions $p(\theta) = \prod_{i = 1}^K p(\theta_i)$ and $q(\theta) = \prod_{i = 1}^K q(\theta_i)$ are factored, we get
\begin{align*}
  \abs{p(\theta) - q(\theta)}
  & = \abs{\prod_{i = 1}^K p(\theta_i) - \prod_{i = 1}^K q(\theta_i)}
  = \abs{\prod_{i = 1}^K p(\theta_i) -
  q(\theta_1) \prod_{i = 2}^K p(\theta_i) +
  q(\theta_1) \prod_{i = 2}^K p(\theta_i) -
  \prod_{i = 1}^K q(\theta_i)} \\
  & \leq \abs{p(\theta_1) - q(\theta_1)} \prod_{i = 2}^K p(\theta_i) +
  q(\theta_1) \abs{\prod_{i = 2}^K p(\theta_i) - \prod_{i = 2}^K q(\theta_i)} \\
  & \leq \sum_{i = 1}^K \left(\prod_{j = 1}^{i - 1} q(\theta_j)\right)
  \abs{p(\theta_i) - q(\theta_i)}
  \left(\prod_{j = i + 1}^K p(\theta_j)\right)\,.
\end{align*}
The last inequality follows from the recursive application of the decomposition. Because of the above factored structure, the integral factors as
\begin{align*}
  \int_\theta \abs{p(\theta) - q(\theta)} \dif \theta
  \leq \sum_{i = 1}^K \int_{\theta_i} \abs{p(\theta_i) - q(\theta_i)} \dif \theta_i\,.
\end{align*}
Let $\hat{\mu}$ and $\tilde{\mu}$ be the means of $p$ and $q$, respectively. Let $s$ be the number of pulls of arm $i$ and $\hat{\sigma}^2 = (\sigma_0^{-2} + s \sigma^{-2})^{-1}$ be the posterior variance, which is the same for both $p$ and $q$. Then, under the assumption that the prior means differ by at most $\eps$ in each entry, each above integral is bounded as
\begin{align*}
  \int_{\theta_i} \abs{p(\theta_i) - q(\theta_i)} \dif \theta_i
  \leq \frac{2}{\sqrt{2 \pi \hat{\sigma}^2}} \abs{\hat{\mu} - \tilde{\mu}}
  \leq \frac{2}{\sqrt{2 \pi \hat{\sigma}^2}} \frac{\hat{\sigma}^2 \eps}{\sigma_0^2}
  \leq \sqrt{\frac{2}{\pi \sigma_0^2}} \eps\,.
\end{align*}
The first inequality holds for any two shifted non-negative unimodal functions, such as $p$ and $q$, with maximum $1 / \sqrt{2 \pi \hat{\sigma}^2}$. The second inequality is from the fact that $\hat{\mu}$ and $\tilde{\mu}$ are estimated from the same $s$ observations, and that the difference of their prior means is at most $\eps$. The last inequality holds because $\hat{\sigma} \leq \sigma_0$.

Finally, we chain all inequalities and get our claim.
\end{proof}

\metaposteriorconcentration*
\begin{proof}
The key idea in the proof is that $\mu_* \mid H_{1 : s - 1} \sim \cN(\hat{\mu}_s, \hat{\Sigma}_s)$ and $\tilde{\mu}_s \mid H_{1 : s - 1} \sim \cN(\hat{\mu}_s, \hat{\Sigma}_s)$, where $\hat{\Sigma}_s$ is a diagonal covariance matrix. We focus on analyzing $\mu_*$ first.

To simplify notation, let $\mu_* = (\mu_{*, i})_{i = 1}^K$, $\hat{\mu}_s = (\hat{\mu}_{s, i})_{i = 1}^K$, and $\hat{\Sigma}_s = \diag{(\hat{\sigma}_{s, i}^2)_{i = 1}^K}$. Fix task $s$ and history $H_{1 : s - 1}$. Then, from the definition of $\mu_* \mid H_{1 : s - 1}$, we have for any $\eps > 0$ that
\begin{align*}
  \condprob{\maxnorm{\mu_* - \hat{\mu}_s} > \eps}{H_{1 : s - 1}}
  & \leq \sum_{i = 1}^K
  \condprob{\abs{\mu_{*, i} - \hat{\mu}_{s, i}} > \eps}{H_{1 : s - 1}}
  \leq 2 \sum_{i = 1}^K
  \exp\left[- \frac{\eps^2}{2 \hat{\sigma}_{s, i}^2}\right] \\
  & \leq 2 \sum_{i = 1}^K \exp\left[- \frac{\eps^2}{2}
  \left(\sigma_q^{-2} + \frac{s - 1}{\sigma_0^2 + \sigma^2}\right)\right]
  = 2 K \exp\left[- \frac{\eps^2}{2}
  \left(\sigma_q^{-2} + \frac{s - 1}{\sigma_0^2 + \sigma^2}\right)\right]\,.
\end{align*}
The third inequality is from a trivial upper bound on $\hat{\sigma}_{s, i}^2$, which holds because each arm is pulled at least once per task. Now we choose
\begin{align*}
  \eps_s
  = \sqrt{2 \left(\sigma_q^{-2} + \frac{s - 1}{\sigma_0^2 + \sigma^2}\right)^{-1}
  \log(4 K / \delta)}
  = \sqrt{2 \frac{\sigma_0^2 + \sigma^2}{(\sigma_0^2 + \sigma^2) \sigma_q^{-2} + s - 1}
  \log(4 K / \delta)}
\end{align*}
and get that $\condprob{\maxnorm{\mu_* - \hat{\mu}_s} > \eps_s}{H_{1 : s - 1}} \leq \delta / 2$ for any task $s$ and history $H_{1 : s - 1}$. It follows that
\begin{align*}
  \prob{\bigcup_{s = 1}^m \set{\maxnorm{\mu_* - \hat{\mu}_s} > \eps_s}}
  \leq \sum_{s = 1}^m \prob{\maxnorm{\mu_* - \hat{\mu}_s} > \eps_s}
  = \sum_{s = 1}^m \E{\condprob{\maxnorm{\mu_* - \hat{\mu}_s} > \eps_s}{H_{1 : s - 1}}}
  \leq \frac{m \delta}{2}\,.
\end{align*}
Since $\tilde{\mu} \mid H_{1 : s - 1}$ is distributed identically to $\mu_* \mid H_{1 : s - 1}$, we have from the same line of reasoning that
\begin{align*}
  \prob{\bigcup_{s = 1}^m \set{\maxnorm{\tilde{\mu}_s - \hat{\mu}_s} > \eps_s}}
  \leq \frac{m \delta}{2}\,.
\end{align*}
Finally, we apply the triangle inequality and union bound,
\begin{align*}
  \prob{\bigcup_{s = 1}^m \set{\maxnorm{\tilde{\mu}_s - \mu_*} > \eps_s}}
  \leq \prob{\bigcup_{s = 1}^m \set{\maxnorm{\tilde{\mu}_s - \hat{\mu}_s} > \frac{\eps_s}{2}}} +
  \prob{\bigcup_{s = 1}^m \set{\maxnorm{\mu_* - \hat{\mu}_s} > \frac{\eps_s}{2}}}\,,
\end{align*}
and then double the value of $\eps_s$.
\end{proof}

\section{Technical Lemmas}
\label{sec:technical lemmas}

\begin{lemma}
\label{lem:gaussian posterior update} Let $\mu_0 \sim \cN(\hat{\mu}, \hat{\sigma}^2)$, $\theta \mid \mu_0 \sim \cN(\mu_0, \sigma_0^2)$, and $Y_i \mid \theta \sim \cN(\theta, \sigma^2)$ for all $i \in [n]$. Then
\begin{align*}
  \mu_0 \mid Y_1, \dots, Y_n
  \sim \cN\left(\lambda^{-1} \left(\frac{\hat{\mu}}{\hat{\sigma}^2} +
  \frac{\sum_{i = 1}^n Y_i}{n \sigma_0^2 + \sigma^2}\right), \,
  \lambda^{-1}\right)\,, \quad
  \lambda
  = \frac{1}{\hat{\sigma}^2} + \left(\sigma_0^2 + \frac{\sigma^2}{n}\right)^{-1}\,.
\end{align*}
\end{lemma}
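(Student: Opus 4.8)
The plan is to prove the Gaussian posterior update lemma by a two-stage Bayesian marginalization: first marginalize out the intermediate variable $\theta$ to obtain the effective likelihood of the observations $Y_1, \dots, Y_n$ given $\mu_0$, and then combine this effective likelihood with the prior on $\mu_0$ via standard conjugacy.

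First I would collapse the two-level Gaussian chain $\mu_0 \to \theta \to Y_i$ into a direct relationship between $\mu_0$ and the observations. Since $\theta \mid \mu_0 \sim \cN(\mu_0, \sigma_0^2)$ and each $Y_i \mid \theta \sim \cN(\theta, \sigma^2)$ independently, the sufficient statistic is the sample mean $\bar{Y} = \frac{1}{n}\sum_{i=1}^n Y_i$. Conditioned on $\theta$, we have $\bar{Y} \mid \theta \sim \cN(\theta, \sigma^2/n)$. Marginalizing out $\theta$ using the fact that the convolution of two Gaussians is Gaussian (means add, variances add), I obtain
\begin{align*}
  \bar{Y} \mid \mu_0 \sim \cN\!\left(\mu_0, \, \sigma_0^2 + \frac{\sigma^2}{n}\right).
\end{align*}
This is the crucial reduction: it shows that, from the viewpoint of $\mu_0$, the whole dataset acts as a single Gaussian observation $\bar{Y}$ with mean $\mu_0$ and variance $\sigma_0^2 + \sigma^2/n$.

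Next I would apply the standard one-dimensional Gaussian--Gaussian conjugate update. With prior $\mu_0 \sim \cN(\hat{\mu}, \hat{\sigma}^2)$ and the single effective observation $\bar{Y} \mid \mu_0 \sim \cN(\mu_0, \sigma_0^2 + \sigma^2/n)$, the posterior precision is the sum of the prior precision and the likelihood precision, giving exactly $\lambda = \hat{\sigma}^{-2} + (\sigma_0^2 + \sigma^2/n)^{-1}$. The posterior mean is the precision-weighted average $\lambda^{-1}\bigl(\hat{\mu}/\hat{\sigma}^2 + \bar{Y}/(\sigma_0^2 + \sigma^2/n)\bigr)$. Rewriting $\bar{Y}/(\sigma_0^2 + \sigma^2/n) = \bigl(\sum_{i=1}^n Y_i\bigr)/(n\sigma_0^2 + \sigma^2)$ matches the claimed form verbatim.

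I do not expect any genuine obstacle here, since both steps are textbook Gaussian identities; the only care needed is the bookkeeping in the marginalization step to confirm that the variance $\sigma^2/n$ of the sample mean combines additively with $\sigma_0^2$ rather than, say, requiring per-observation precision accumulation. An alternative route would be to integrate out $\theta$ explicitly by completing the square in the joint density $\cN(\mu_0; \hat{\mu}, \hat{\sigma}^2)\prod_i \cN(Y_i; \theta, \sigma^2)\cN(\theta; \mu_0, \sigma_0^2)$, but the sufficient-statistic argument above is cleaner and avoids grinding through the $n$-fold Gaussian product, so I would present that version.
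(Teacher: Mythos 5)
Your proof is correct, but it takes a genuinely different route from the paper's. The paper proves the lemma by direct integration: it writes the posterior as $\int_\theta \prod_{i=1}^n \cN(Y_i;\theta,\sigma^2)\,\cN(\theta;\mu_0,\sigma_0^2)\dif\theta\,\cN(\mu_0;\hat\mu,\hat\sigma^2)$, completes the square in $\theta$ to evaluate the inner integral, completes the square a second time in $\mu_0$, and then verifies by algebra that the resulting precision $c_2 = \hat{v} + v_0 - c_1^{-1}v_0^2$ equals $\lambda$. You instead exploit structure: sufficiency of $\bar{Y}$ (via $\sum_i(Y_i-\theta)^2 = \sum_i(Y_i-\bar{Y})^2 + n(\bar{Y}-\theta)^2$, so the $\theta$-dependent part of the likelihood depends on the data only through $\bar{Y}$), then the convolution identity $\bar{Y} = \mu_0 + \nu + \epsilon$ with $\nu\sim\cN(0,\sigma_0^2)$ and $\epsilon\sim\cN(0,\sigma^2/n)$ independent given $\mu_0$, which yields $\bar{Y}\mid\mu_0\sim\cN(\mu_0,\sigma_0^2+\sigma^2/n)$, and finally the one-observation conjugate update. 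Your route is shorter and makes the answer transparent: $\lambda$ is visibly the prior precision plus the precision of the single effective observation $\bar{Y}$, an interpretation the paper's derivation recovers only in its last lines of algebra; the paper's approach, in exchange, is fully self-contained and needs no appeal to sufficiency or convolution facts. If you write yours up, make the sufficiency reduction explicit: conditioning on $Y_1,\dots,Y_n$ is equivalent to conditioning on $\bar{Y}$ because the factor $\exp\left[-\frac{1}{2\sigma^2}\sum_i(Y_i-\bar{Y})^2\right]$ is free of $\theta$, hence survives marginalization over $\theta$ unchanged and cancels in the posterior normalization over $\mu_0$; with that noted, both of your steps are indeed textbook identities.
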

\begin{proof}
The derivation is standard \citep{gelman07data} and we only include it for completeness. To simplify notation, let
\begin{align*}
  v
  = \sigma^{-2}\,, \quad
  v_0
  = \sigma_0^{-2}\,, \quad
  \hat{v}
  = \hat{\sigma}^{-2}\,, \quad
  c_1
  = v_0 + n v\,, \quad
  c_2
  = \hat{v} + v_0 - c_1^{-1} v_0^2\,.
\end{align*}
The posterior distribution of $\mu_0$ is
\begin{align*}
  & \int_\theta \left(\prod_{i = 1}^n \cN(Y_i; \theta, \sigma^2)\right)
  \cN(\theta; \mu_0, \sigma_0^2) \dif \theta \,
  \cN(\mu_0; \hat{\mu}, \hat{\sigma}^2) \\
  & \quad \propto \int_\theta \exp\left[
  - \frac{1}{2} v \sum_{i = 1}^n (Y_i - \theta)^2 -
  \frac{1}{2} v_0 (\theta - \mu_0)^2\right] \dif \theta
  \exp\left[- \frac{1}{2} \hat{v} (\mu_0 - \hat{\mu})^2\right]\,.
\end{align*}
Let $f(\mu_0)$ denote the integral. We solve it as
\begin{align*}
  f(\mu_0)
  & = \int_\theta \exp\left[- \frac{1}{2}
  \left(v \sum_{i = 1}^n (Y_i^2 - 2 Y_i \theta + \theta^2) +
  v_0 (\theta^2 - 2 \theta \mu_0 + \mu_0^2)\right)\right] \dif \theta \\
  & \propto \int_\theta \exp\left[- \frac{1}{2}
  \left(c_1 \left(\theta^2 -
  2 c_1^{-1} \theta \left(v \sum_{i = 1}^n Y_i + v_0 \mu_0\right)\right) +
  v_0 \mu_0^2\right)\right] \dif \theta \\
  & = \int_\theta \exp\left[- \frac{1}{2}
  \left(c_1 \left(\theta -
  c_1^{-1} \left(v \sum_{i = 1}^n Y_i + v_0 \mu_0\right)\right)^2 -
  c_1^{-1} \left(v \sum_{i = 1}^n Y_i + v_0 \mu_0\right)^2 +
  v_0 \mu_0^2\right)\right] \dif \theta \\
  & = \exp\left[- \frac{1}{2}
  \left(- c_1^{-1} \left(v \sum_{i = 1}^n Y_i + v_0 \mu_0\right)^2 +
  v_0 \mu_0^2\right)\right] \\
  & \propto \exp\left[- \frac{1}{2}
  \left(- c_1^{-1} \left(2 v_0 \mu_0 v \sum_{i = 1}^n Y_i + v_0^2 \mu_0^2\right) +
  v_0 \mu_0^2\right)\right]\,.
\end{align*}
Now we chain all equalities and have
\begin{align*}
  f(\mu_0) \exp\left[- \frac{1}{2} \hat{v} (\mu_0 - \hat{\mu})^2\right]
  & \propto \exp\left[- \frac{1}{2}
  \left(- c_1^{-1} \left(2 v_0 \mu_0 v \sum_{i = 1}^n Y_i + v_0^2 \mu_0^2\right) +
  v_0 \mu_0^2 + \hat{v} (\mu_0^2 - 2 \mu_0 \hat{\mu} + \hat{\mu}^2)
  \right)\right] \\
  & \propto \exp\left[- \frac{1}{2}
  c_2 \left(\mu_0^2 -
  2 c_2^{-1} \mu_0 \left(c_1^{-1} v_0 v \sum_{i = 1}^n Y_i +
  \hat{v} \hat{\mu}\right)\right)\right] \\
  & \propto \exp\left[- \frac{1}{2}
  c_2 \left(\mu_0 -
  c_2^{-1} \left(c_1^{-1} v_0 v \sum_{i = 1}^n Y_i +
  \hat{v} \hat{\mu}\right)\right)^2\right]\,.
\end{align*}
Finally, note that
\begin{align*}
  c_1
  & = \frac{1}{\sigma_0^2} + \frac{n}{\sigma^2}
  = \frac{n \sigma_0^2 + \sigma^2}{\sigma_0^2 \sigma^2}\,, \\
  c_1^{-1} v_0 v
  & = \frac{\sigma_0^2 \sigma^2}{n \sigma_0^2 + \sigma^2} \frac{1}{\sigma_0^2 \sigma^2}
  = \frac{1}{n \sigma_0^2 + \sigma^2}\,, \\
  c_2
  & = \frac{1}{\hat{\sigma}^2} +
  \frac{1}{\sigma_0^2} \left(1 - \frac{c_1^{-1}}{\sigma_0^2}\right)
  = \frac{1}{\hat{\sigma}^2} +
  \frac{1}{\sigma_0^2} \left(1 - \frac{\sigma^2}{n \sigma_0^2 + \sigma^2}\right)
  = \frac{1}{\hat{\sigma}^2} + \left(\sigma_0^2 + \frac{\sigma^2}{n}\right)^{-1}
  = \lambda\,.
\end{align*}
This concludes the proof.
\end{proof}

\begin{lemma}
\label{lem:reciprocal root sum} For any integer $n$ and $a \geq 0$,
\begin{align*}
  \sum_{i = 1}^n \frac{1}{\sqrt{i + a}}
  \leq 2 (\sqrt{n + a} - \sqrt{a})
  \leq 2 \sqrt{n}\,.
\end{align*}
\end{lemma}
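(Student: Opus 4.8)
The plan is to bound the sum by an integral using monotonicity, evaluate that integral in closed form to obtain the first inequality, and then invoke subadditivity of the square root for the second inequality.

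First I would observe that the function $x \mapsto 1/\sqrt{x + a}$ is positive and strictly decreasing on $[0, \infty)$ whenever $a \geq 0$. The decreasing property means that for each index $i \in [n]$, the value $1/\sqrt{i + a}$ is at most the value of the function at any point in the interval $[i-1, i]$, and in particular it is at most the average of the function over that interval. This gives the per-term bound
\begin{align*}
  \frac{1}{\sqrt{i + a}}
  \leq \int_{i-1}^{i} \frac{\dif x}{\sqrt{x + a}}\,.
\end{align*}
Summing over $i = 1, \dots, n$ telescopes the integration domains into the single interval $[0, n]$, so that $\sum_{i=1}^n 1/\sqrt{i+a} \leq \int_0^n \dif x / \sqrt{x+a}$.

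Next I would evaluate the integral directly. Since an antiderivative of $1/\sqrt{x + a}$ is $2\sqrt{x + a}$, we get $\int_0^n \dif x / \sqrt{x + a} = 2\sqrt{n + a} - 2\sqrt{a} = 2(\sqrt{n + a} - \sqrt{a})$, which is exactly the first claimed inequality. For the second inequality I would use subadditivity of the square root, namely $\sqrt{n + a} \leq \sqrt{n} + \sqrt{a}$; this follows by squaring both sides and noting $n + a \leq n + a + 2\sqrt{na}$. Rearranging gives $\sqrt{n + a} - \sqrt{a} \leq \sqrt{n}$, and multiplying by $2$ yields $2(\sqrt{n+a} - \sqrt{a}) \leq 2\sqrt{n}$, completing the chain.

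There is no substantive obstacle in this lemma; it is a routine sum-versus-integral comparison. The only points requiring care are orienting the integral bound in the correct direction (the decreasing property forces $1/\sqrt{i+a}$ to be dominated by the integral over the interval immediately to its left, $[i-1,i]$, rather than the one to its right), and checking that the edge case $a = 0$ is covered, which it is since $\sqrt{a} = 0$ makes both displayed quantities equal to $2\sqrt{n}$ and the argument goes through unchanged.
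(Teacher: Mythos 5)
Your proof is correct and follows essentially the same route as the paper: a sum-versus-integral comparison using the monotonicity of $x \mapsto 1/\sqrt{x+a}$ (your integral $\int_0^n \dif x/\sqrt{x+a}$ is the paper's $\int_a^{n+a} \dif x/\sqrt{x}$ after a change of variable), followed by subadditivity of the square root. The only difference is that you spell out the per-interval bound and the squaring argument that the paper leaves implicit.
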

\begin{proof}
Since $1 / \sqrt{i + a}$ decreases in $i$, the sum can be bounded using integration as
\begin{align*}
  \sum_{i = 1}^n \frac{1}{\sqrt{i + a}}
  \leq \int_{x = a}^{n + a} \frac{1}{\sqrt{x}} \dif x
  = 2 (\sqrt{n + a} - \sqrt{a})\,.
\end{align*}
The inequality $\sqrt{n + a} - \sqrt{a} \leq \sqrt{n}$ holds because the square root has diminishing returns.
\end{proof}

\begin{lemma}
\label{lem:reciprocal sum} For any integer $n$ and $a \geq 0$,
\begin{align*}
  \sum_{i = 1}^n \frac{1}{i + a}
  \leq \log(1 + n / a)\,.
\end{align*}
\end{lemma}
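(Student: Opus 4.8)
The plan is to bound the sum by an integral, exactly mirroring the proof of \cref{lem:reciprocal root sum} just above. For $a > 0$ the integrand $x \mapsto 1/(x+a)$ is positive and strictly decreasing on $[0, \infty)$, so on each unit interval $[i-1, i]$ it is everywhere at least its value $1/(i+a)$ at the right endpoint. This gives the per-term comparison
\begin{align*}
  \frac{1}{i + a}
  \leq \int_{x = i - 1}^{i} \frac{1}{x + a} \dif x\,.
\end{align*}

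Summing this over $i = 1, \dots, n$ stitches the integration ranges together into $[0, n]$ and yields
\begin{align*}
  \sum_{i = 1}^n \frac{1}{i + a}
  \leq \int_{x = 0}^{n} \frac{1}{x + a} \dif x
  = \log(n + a) - \log a
  = \log\!\left(\frac{n + a}{a}\right)
  = \log(1 + n / a)\,,
\end{align*}
which is precisely the claimed bound. The degenerate case $a = 0$ is trivial, since the right-hand side is then $+\infty$.

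There is no genuine obstacle here; this is a routine lower-rectangle integral estimate. The only point requiring a moment's care is the choice of comparison interval: one must bound $1/(i+a)$ by the integral over $[i-1, i]$ rather than over $[i, i+1]$, so that the summed ranges begin at $0$ and produce $\log a$ in the lower limit (hence $\log((n+a)/a)$) rather than $\log(1+a)$. Using the decreasing monotonicity in the correct direction is what makes the per-term inequality point the right way.
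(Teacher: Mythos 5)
Your proof is correct and is essentially identical to the paper's: both bound the sum of the decreasing function by an integral, and your $\int_{x=0}^{n} \frac{1}{x+a} \dif x$ is just the paper's $\int_{x=a}^{n+a} \frac{1}{x} \dif x$ after a change of variables. The only (harmless) addition is your explicit remark on the degenerate case $a = 0$, which the paper leaves implicit.
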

\begin{proof}
Since $1 / (i + a)$ decreases in $i$, the sum can be bounded using integration as
\begin{align*}
  \sum_{i = 1}^n \frac{1}{i + a}
  \leq \int_{x = a}^{n + a} \frac{1}{x} \dif x
  = \log(n + a) - \log a
  = \log(1 + n / a)\,.
\end{align*}
\end{proof}

\section{Measure-Theoretic View}
\label{sec:measure-theoretic-view}

\measuretheorylemma*
\begin{proof}
Note that by Ionescu-Tulcea theorem
we have a well-defined measure
$\mathbb{P}_{z,\theta_s,\pi}$
and the Radon-Nikodym derivative of 
$\mathbb{P}_{z,\theta_s,\pi}$ 
with respect to the 
the $\sigma$-finite measure
$(Q\times\lambda_\rho\times\kappa)^{n}$
where $\kappa$ denotes the counting measure is:
\begin{align*}
p_{\theta_s,\pi}(a_{s,1},x_{s,1},\dots,
a_{s,n},x_{s,n}) = 
\prod_{t=1}^n
\pi_{s,t}(a_{s,t}|
a_{s,1},x_{s,1},\dots,a_{s,t-1},x_{s,t-1})
p_{\theta_s,a_{s,t}}(x_{s,t}).
\end{align*}
Further, marginalizing over the 
the environments $\theta_1,\dots,\theta_m$ to
get the measure $\mathbb{P}_{z,\pi}$ and 
taking its derivative with respect to 
the $\sigma$-finite measure
$(\lambda_\rho\times\kappa)^{m\times n}$ 
gives the following:
\begin{align*}
p_{\theta,\pi}(a_{1,1},x_{1,1},
\dots,a_{m,t},x_{m,t}) =
\prod_{s=1}^m 
\prod_{t=1}^n\pi_{s,t}(a_{s,t}|a_{1,1},x_{1,1},
\dots,a_{s,t-1},x_{s,t-1})
\int_{\theta\in\Theta}
p_{\theta,a_{s,t}}(x_{s,t}) \dif P_z(\theta).
\end{align*}

For a set $S_1$ which 
is an element of the $\sigma$-algebra $\sigma(\Theta)$
over the space of the environments $\Theta$,
$S_1\in\sigma(\Theta)$,
and any hidden variable $z$ from the 
space of outcomes of the meta-prior, $z\in\cZ$, the 
task-posterior after $t$ rounds in task 
$s$ could be written as:
\begin{align*}
P_{z}(
S_1|a_{s,1},x_{s,1},\dots,a_{s,t},x_{s,t})
&= \frac{\int_{S_1} p_{\theta_s,\pi}(
a_{s,1},x_{s,1},\dots,a_{s,t},x_{s,t})
\dif P_{z}(\theta_s)}{\int_\Theta
p_{\theta_s,\pi}(
a_{s,1},x_{s,1},\dots, a_{s,t},x_{s,t})
\dif P_{z}(\theta_s)} \\
&= \frac{\int_{S_1} 
\prod_{j=1}^{t-1} p_{\theta_s,a_{s,j}}(x_{s,j})
\dif P_{z}(\theta_s)
}{\int_\Theta
\prod_{j=1}^{t-1} p_{\theta_s,a_{s,j}}(x_{s,j})
\dif P_{z}(\theta_s)}.
\end{align*}
Further, for any set $S_2$ which
is an element of the $\sigma$-algebra $\sigma(\cZ)$
over the space of hidden variables $\cZ$,
$S_2\in\sigma(\cZ)$,
the meta-posterior after $t$ rounds in task
$s$ could be written as: 
\begin{align*}
Q(S_2|a_{1,1},x_{1,1},\dots,a_{s-1,n},x_{s-1,n})
&= \frac{\int_{S_2} p_{z,\pi}(
a_{1,1},x_{1,1},\dots,a_{s-1,n},x_{s-1,n})
\dif Q(z)}{
\int_{\cZ} p_{z,\pi}(a_{1,1},x_{1,1},\dots
,a_{s-1,n},x_{s-1,n}) \dif Q(z)}\\
&= \frac{
\int_{S_2} 
\left[\prod_{\ell=1}^{s-1} \prod_{j=1}^n
\int_{\Theta}p_{
\theta,a_{\ell,j}}(x_{\ell,j})
\dif P_z(\theta)
\right] \dif Q(z)}{
\int_\cZ \left[\prod_{\ell=1}^{s-1} \prod_{j=1}^n
\int_{\Theta}p_{
\theta,a_{\ell,j}}(x_{\ell,j})
\dif P_z(\theta)
\right]\dif Q(z)}.
\end{align*}
\end{proof}

\section{Bayesian Multi-Task Regression}
\label{sec:bayesian multi-task regression}

Multi-task Bayesian regression was proposed by \citet{lindley72bayes}. In their work, there is only one prior for all the regression tasks. In other words, there is no meta-prior and the posterior is calculated only for $2$ levels. In our work, we extended this formulation by adding a meta-prior. Now we need to compute the posterior across $3$ levels, which we work out in this section. Recall that all variances in our hierarchical data generation process are known. Formally, we have the following generative model
\begin{equation}
\begin{aligned}
  \theta_0 \;
  &\sim \; \mathcal{N}(\mu_0, \Lambda_0^{-1}) \\
  \theta_{t}|\theta_0 \;
  &\sim \; \mathcal{N}(\theta_0, \Sigma) \qquad\;\, \text{ for } t=1,2,\dots,T \\
  y_{t,i}|\theta_t, x_{t,i} \;
  &\sim \; \mathcal{N}(x_{t,i}^T \theta_t, \sigma^2) \quad \text{ for } i=1,2,\dots,N
\end{aligned}
\end{equation}
For convenience, we group all observations in a matrix-vector form as $X_t = (x_{t,1},\dots,x_{t,n})$ and $y_t = (y_{t,1},\dots,y_{t,n})$. Our goal is to compute $p\left(\theta_0 \middle| X_{1:T}, y_{1:T}\right)$. We begin by trying to recursively compute
\begin{equation}
  p(\theta_0|X_{1:t}, y_{1:t})
  \propto p(\theta_0|X_{1:t-1}, y_{1:t-1}) p(y_{t} | \theta_0, X_{t})
  \label{eq:recurse}
\end{equation}

\subsection{Computing $p(y_{t} | \theta_0, X_{t})$}

Note that, given $\theta_t$ and $X_t$, we have
\begin{equation}
    y_t = X_t\theta_t + \epsilon \qquad \text{ where } \epsilon \sim \mathcal{N}(0, \sigma^2 I)
\end{equation}
Similarly, given $\theta_0$, we have
\begin{equation}
    \theta_t = \theta_0 + \nu \qquad \text{ where } \nu \sim \mathcal{N}(0, \Sigma)
\end{equation}
Combining the above two equations
\begin{equation}
    y_t = X_t\theta_0 + X_t \nu + \epsilon
\end{equation}
From the above, it is clear that $y_t|\theta_0, X_{t}$ is a Gaussian. We can find the parameters of this distribution easily as
\begin{equation}
\begin{aligned}
    \mathbb{E}[y_t|\theta_0, X_t] &= X_t\theta_0\\
    \text{Cov}(y_t|\theta_0, X_t) &= \mathbb{E}[(y_t-X_t\theta_0)(y_t-X_t\theta_0)^T] \\
    &= \mathbb{E}[(X_t \nu + \epsilon)(X_t \nu + \epsilon)^T] \\
    &= \mathbb{E}[(X_t \nu\nu^TX_t^T + \epsilon \nu^T X_t^T + X_t \nu \epsilon^T + \epsilon\epsilon^T)] \\
    &= X_t \underbrace{\mathbb{E}[\nu\nu^T]}_{=\Sigma}X_t^T + \underbrace{\mathbb{E}[\epsilon \nu^T]}_{=0 \text{ as } \epsilon \perp \nu} X_t + X_t \underbrace{\mathbb{E}[\nu \epsilon^T]}_{=0 \text{ as } \epsilon \perp \nu} + \underbrace{\mathbb{E}[\epsilon\epsilon^T]}_{=\sigma^2 I} \\
    &= X_t\Sigma X_t^T + \sigma^2I
\end{aligned}
\end{equation}
Thus we get
\begin{equation}
    y_{t} | \theta_0, X_{t} \; \sim \; \mathcal{N}(X_t\theta_0, \  \sigma^2I + X_t\Sigma X_t^T)
    \label{eq:marginal_final}
\end{equation}

\subsection{Computing $p(\theta_0|X_{1:t}, y_{1:t})$ by induction}

\paragraph{Induction hypothesis:} $\forall t: \theta_0|X_{1:t}, y_{1:t} \; \sim \; \mathcal{N}(\mu_t, \Lambda_t^{-1}) $

\paragraph{Base case $t=0$:} This corresponds to prior as there is no data and thus by definition $\theta_0 \; \sim \; \mathcal{N}(\mu_0, \Lambda_0^{-1})$.

\paragraph{Inductive step} We derive the distribution of $\theta_0|X_{1:t}, y_{1:t}$ given that $\theta_0|X_{1:t-1}, y_{1:t-1} \; \sim \; \mathcal{N}(\mu_{t-1}, \Lambda_{t-1}^{-1})$. Then by using the recurrence from \eqref{eq:recurse} and marginal distribution from \eqref{eq:marginal_final}, we obtain
\begin{equation}
\begin{aligned}
    p(\theta_0|X_{1:t}, y_{1:t}) &\propto p(\theta_0|X_{1:t-1}, y_{1:t-1}) p(y_{t} | \theta_0, X_{t}) \\
    &\propto \exp\left\{ -\frac{1}{2} (\theta_0-\mu_{t-1})^T\Lambda_{t-1}(\theta_0-\mu_{t-1})  \right\}  \exp\left\{ -\frac{1}{2} (y_t-X_t\theta_0)^T(\sigma^2I + X_t\Sigma X_t^T)^{-1}(y_t-X_t\theta_0)  \right\} \\
    &\propto \exp\left\{ -\frac{1}{2} (\theta_0-\mu_{t-1})^T\Lambda_{t-1}(\theta_0-\mu_{t-1})   -\frac{1}{2} (y_t-X_t\theta_0)^T(\sigma^2I + X_t\Sigma X_t^T)^{-1}(y_t-X_t\theta_0)  \right\} \\
    &\propto \exp\left\{ -\frac{1}{2}\theta_0^T\Lambda_{t-1}\theta_0 +\theta_0^T\Lambda_{t-1}\mu_{t-1} - \frac{1}{2} \theta_0^TX_t^T(\sigma^2I + X_t\Sigma X_t^T)^{-1}X_t\theta_0 + \theta_0^TX_t^T(\sigma^2I + X_t\Sigma X_t^T)^{-1}y_t \right\} \\
    &\propto \exp\left\{ -\frac{1}{2}\theta_0^T\underbrace{\left(\Lambda_{t-1} + X_t^T(\sigma^2I + X_t\Sigma X_t^T)^{-1}X_t\right)}_{=\Lambda_t}\theta_0 +\theta_0^T\underbrace{\left(\Lambda_{t-1}\mu_{t-1} + X_t^T(\sigma^2I + X_t\Sigma X_t^T)^{-1}y_t \right)}_{=\Lambda_t\mu_t} \right\} \\
    &\propto \exp\left\{ -\frac{1}{2}\theta_0^T\Lambda_t\theta_0 +\theta_0^T\Lambda_t\mu_t \right\} \\
    &\propto \exp\left\{ -\frac{1}{2}\theta_0^T\Lambda_t\theta_0 +\theta_0^T\Lambda_t\mu_t -\frac{1}{2}\mu_t^T\Lambda_t\mu_t  \right\} \\
    &\propto \exp\left\{ -\frac{1}{2}(\theta_0-\mu_t)^T\Lambda_t(\theta_0-\mu_t) \right\} \\
\end{aligned}
\end{equation}
This completes our proof by induction, as now we have shown that $\theta_0|X_{1:t}, y_{1:t} \; \sim \; \mathcal{N}(\mu_t, \Lambda_t^{-1})$, where
\begin{equation}
\begin{aligned}
    \Lambda_t &= \Lambda_{t-1} + X_t^T(\sigma^2I + X_t\Sigma X_t^T)^{-1}X_t\\
    \mu_t &= \Lambda_t^{-1} \left(\Lambda_{t-1}\mu_{t-1} + X_t^T(\sigma^2I + X_t\Sigma X_t^T)^{-1}y_t \right)
\end{aligned}
\end{equation}
Using these recurrences, the posterior after $T$ tasks is a Gaussian with parameters
\begin{equation}
\begin{aligned}
    \Lambda_T &= \Lambda_0 + \sum_{t=1}^T X_t^T(\sigma^2I + X_t\Sigma X_t^T)^{-1}X_t\\
    \mu_T &= \Lambda_T^{-1} \left( \Lambda_{0}\mu_{0} + \sum_{t=1}^T X_t^T(\sigma^2I + X_t\Sigma X_t^T)^{-1}y_t \right)
\end{aligned}
\end{equation}
In a high-dimensional setting, where $N<K$, the above equation is efficient as it handles matrices of size $N \times N$ at cost $O(N^{\omega})$. In a large-sample setting, where $K<N$, using $N \times N$ matrices can be costly. To reduce the computational cost to $O(K^{\omega})$, we apply the Woodbury matrix identity
\begin{equation}
  \left(A+UCV\right)^{-1}
  = A^{-1}-A^{-1}U\left(C^{-1}+VA^{-1}U\right)^{-1}VA^{-1}
\end{equation}
to $(\sigma^2I + X_t\Sigma X_t^T)^{-1}$. This yields new recurrences
\begin{equation}
\begin{aligned}
    \Lambda_t &= \Lambda_{t-1} + \frac{S_t}{\sigma^{2}} - \frac{S_t}{\sigma^{2}} \left(\Sigma^{-1}+ \frac{S_t}{\sigma^{2}}\right)^{-1} \frac{S_t}{\sigma^{2}}\\
    \mu_t &= \Lambda_t^{-1} \left(\Lambda_{0}\mu_{0} + \sum_{t=1}^T \frac{c_t}{\sigma^{2}} - \frac{S_t}{\sigma^{2}} \left(\Sigma^{-1}+ \frac{S_t}{\sigma^{2}}\right)^{-1} \frac{c_t}{\sigma^{2}} \right)
\end{aligned}
\label{eq:linear bandit posterior}
\end{equation}
where
\begin{equation}
\begin{aligned}
    S_t &= X_t^TX_t\\
    c_t &= X_t^Ty_t
\end{aligned}
\end{equation}
Then final posterior parameters turn out to be
\begin{equation}
\begin{aligned}
    \Lambda_T &= \Lambda_0 + \sum_{t=1}^T \frac{S_t}{\sigma^{2}} - \frac{S_t}{\sigma^{2}} \left(\Sigma^{-1}+ \frac{S_t}{\sigma^{2}}\right)^{-1} \frac{S_t}{\sigma^{2}}\\
    \mu_T &= \Lambda_T^{-1} \left( \Lambda_{0}\mu_{0} + \sum_{t=1}^T \frac{c_t}{\sigma^{2}} - \frac{S_t}{\sigma^{2}} \left(\Sigma^{-1}+ \frac{S_t}{\sigma^{2}}\right)^{-1} \frac{c_t}{\sigma^{2}} \right)
\end{aligned}
\end{equation}
This concludes the proof.

\section{Additional Experiments}
\label{sec:additional experiments}

\begin{figure*}[t]
  \centering
  \includegraphics[width=6.6in]{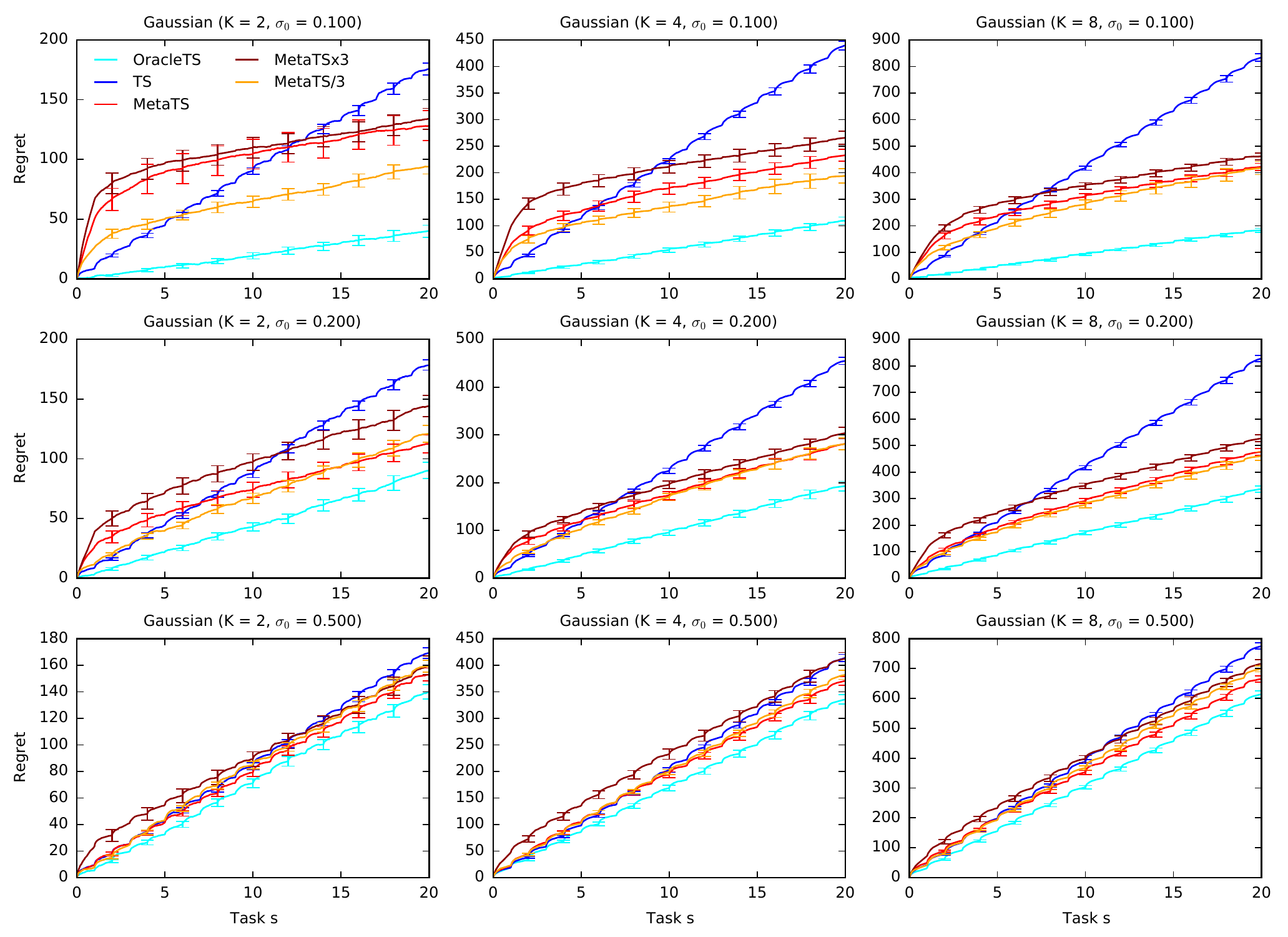}
  \vspace{-0.1in}
  \caption{\metats in $K$-armed Gaussian bandits with a varying number of arms $K$ and instance prior width $\sigma_0$.}
  \label{fig:gaussian}
\end{figure*}

\begin{figure*}[t]
  \centering
  \includegraphics[width=6.6in]{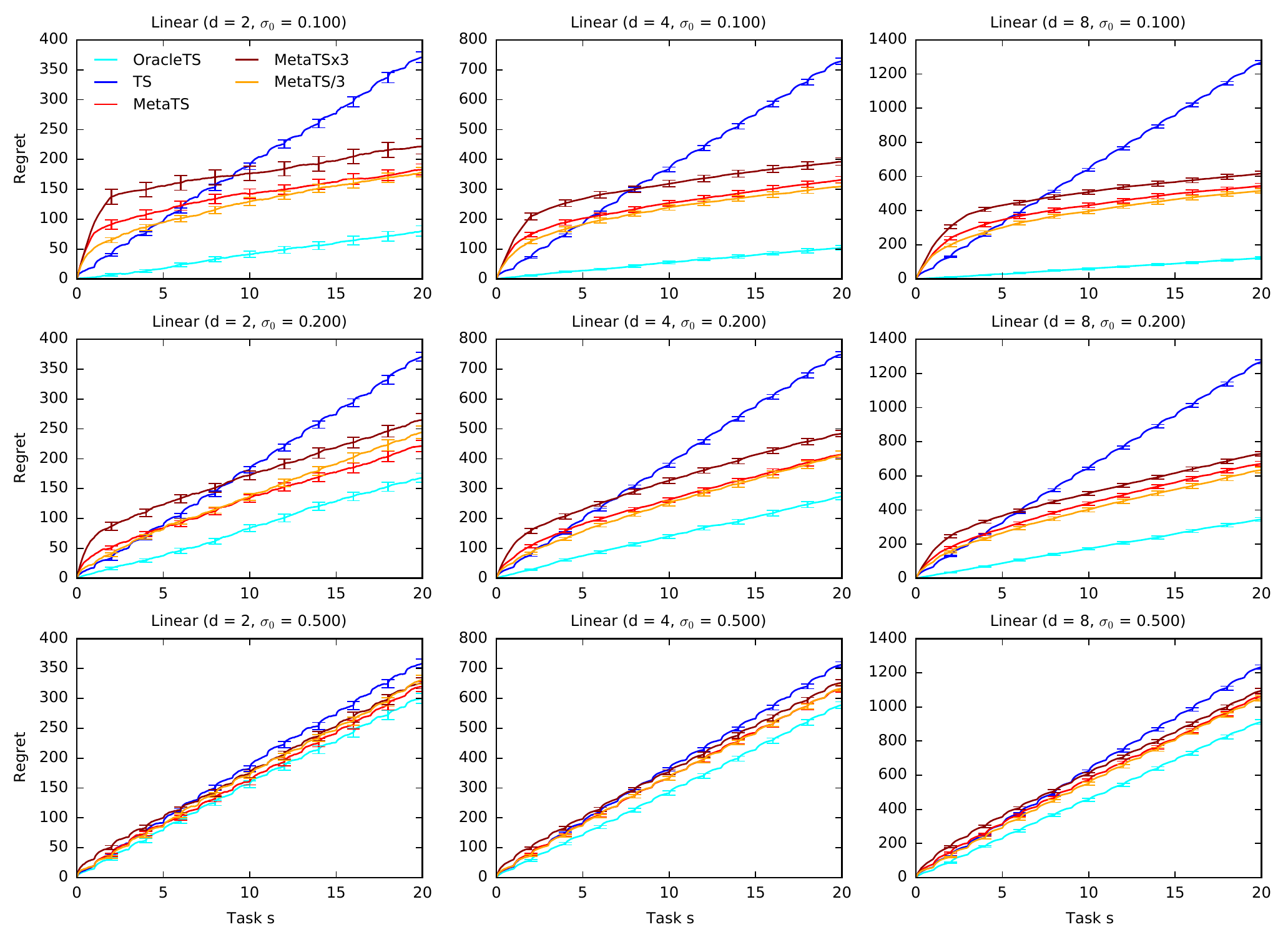}
  \vspace{-0.1in}
  \caption{\metats in $d$-dimensional linear bandits with a varying dimensionality $d$ and instance prior width $\sigma_0$.}
  \label{fig:linear}
\end{figure*}

This section contains additional experiments with \metats.

In \cref{fig:gaussian}, we experiment with $K$-armed Gaussian bandits. The experimental setting is the same as in \cref{sec:experiments}, except that we increase the number of arms $K$ from $2$ to $8$, and the instance prior width $\sigma_0$ from $0.1$ to $0.5$. As $K$ increases, we observe that the benefits of meta-learning do not diminish. As $\sigma_0$ increases, the bandit instances become more uncertain, and this diminished the value of meta-leaning the instance prior.

We also study the robustness of \metats to misspecification. Since the meta-prior $Q$ is the main new concept introduced by our work, we study that one. We experiment with two misspecified variants of \metats: \metatsx and \metatsd. In \metatsx, the meta-prior width is widened to $3 \sigma_q$. This corresponds to an overoptimistic agent, which may commit to an incorrect instance prior. In \metatsd, the meta-prior width is shortened to $\sigma_q / 3$. This corresponds to a conservative agent, which may learn slower than necessary. Our results in \cref{fig:gaussian} show that \metats is robust to misspecification. While the misspecification has an impact on regret, sometimes positive, we do not observe catastrophic failures.

In \cref{fig:linear}, we experiment with $d$-dimensional linear bandits. The experimental setting is the same as in \cref{sec:experiments}, except that we increase $d$ from $2$ to $8$, and the instance prior width $\sigma_0$ from $0.1$ to $0.5$. We observe the same trends as in \cref{fig:gaussian}. This means that our earlier meta-learning conclusions generalize to structured problems.

\end{document}